\def\h{\bm{h}}
\def\c{\bm{c}}
\def\xi{\bm{\xi}}
\def\pib{\bm{\varphi}}
\def\gammab{\bm{\gamma}}
\def\lambdab{\bm{\lambda}}
\def\psib{\bm{\psi}}
\def\pib{\bm{\pi}}
\DeclareMathOperator*{\sign}{sign}
\DeclareMathAlphabet{\mathdutchcal}{U}{dutchcal}{m}{n}
\DeclareMathOperator*{\argmin}{arg\min}
\newtheorem{theorem}{Theorem}
\newtheorem{corollary}{Corollary}
\newtheorem{inlemma}{Lemma}
\newtheorem{assump}{Assumption}
\newtheorem{property}{Property}
\journal{Signal Processing}
\begin{document}

\doublespacing

\begin{frontmatter}
    \title{Non-Asymptotic Performance of Social Machine Learning Under Limited Data \tnoteref{t1}}	
    \tnotetext[t1]{A preliminary short conference version of this work was previously published in \cite{hu2023performance}.}
    
    \author[1]{Ping~Hu\corref{cor1}}
    \ead{ping.hu@epfl.ch}
    
    \author[1]{Virginia~Bordignon}
    \ead{virginia.bordignon@epfl.ch}
    
    \author[1]{Mert~Kayaalp}
    \ead{mert.kayaalp@epfl.ch}
    
    \author[1]{Ali~H.~Sayed}
    \ead{ali.sayed@epfl.ch}
    
    \cortext[cor1]{Corresponding author}
    \affiliation[1]{organization={School of Engineering, EPFL},
    postcode={CH-1015},
    city={Lausanne},
    country={Switzerland}}

    \begin{abstract}
        This paper studies the probability of error associated with the social machine learning framework, which involves an independent training phase followed by a cooperative decision-making phase over a graph. This framework addresses the problem of classifying a stream of unlabeled data in a distributed manner. In this work, we examine the classification task with \emph{limited} observations during the decision-making phase, which requires a non-asymptotic performance analysis. We establish a condition for consistent training and derive an upper bound on the probability of error for classification. The results clarify the dependence on the statistical properties of the data and the combination policy used over the graph. They also establish the exponential decay of the probability of error with respect to the number of unlabeled samples. 
    \end{abstract}
        
    \begin{keyword}
        Social machine learning\sep probability of error\sep classification\sep non-asymptotic analysis
    \end{keyword}

\end{frontmatter}

\section{Introduction}
\label{sec:intro}
Social learning is a useful paradigm for addressing decision-making tasks involving a group of agents. Practical applications arise in various scenarios, such as detection and object recognition using autonomous robots, as well as statistical inference and learning across multiple processors  \cite{krishnamurthy2013social}. In this paper, we focus on the \emph{social machine learning} (SML) framework introduced in \cite{bordignon2023learning}, which is a \emph{data-driven} cooperative decision-making paradigm. The main motivation for the introduction of this framework is to address a critical limitation of traditional \emph{social learning} solutions \cite{bordignon2021adaptive,jadbabaie2012nonbayesian,shahrampour2016distributed,nedic2017fast,salami2017social,lalitha2018social,matta2020Interplay,kayaalp2022arithmetic}. These solutions allow a group of agents to interact over a graph to arrive at consensus decisions about a hypothesis of interest. However, a limiting assumption in all these studies is the requirement that the likelihood models for data generation are known beforehand. The SML strategy removes this requirement, thus opening up the door for solving classification tasks in a distributed manner with performance guarantees by relying solely on a data-driven implementation.

\begin{figure}[!t]
    \centering
    \includegraphics[width=.75\linewidth]{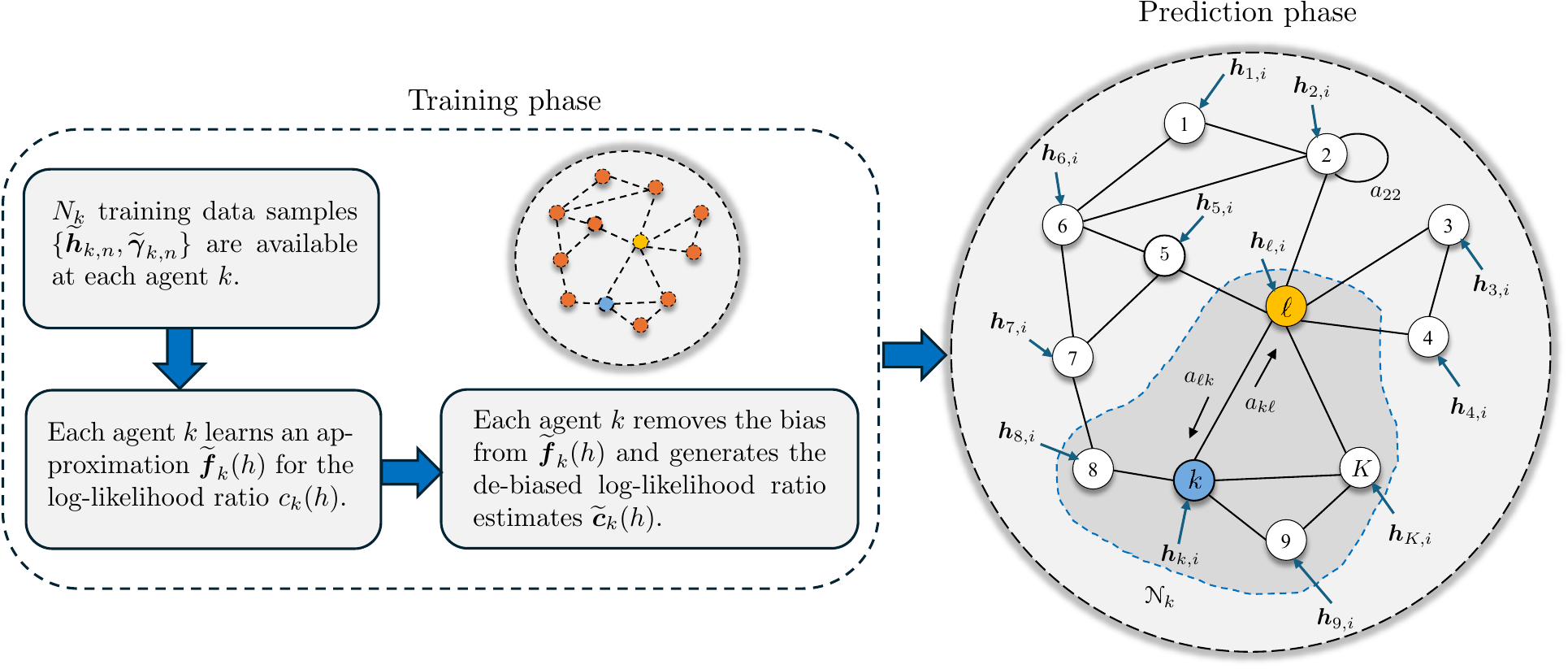}
    \caption{SML architecture. (\emph{Left panel}) The independent training process where each agent $k$ finds an optimal model $\widetilde{\bm{f}}_k$ based on its training set and constructs a classifier $\widetilde{\bm{c}}_k$ involving a debiasing operation. (\emph{Right panel}) The cooperative classification process where each agent $k$ receives a sequence of streaming observations $\h_{k,i}$ and implements a social learning protocol to enhance the prediction performance. The neighboring set $\mathcal{N}_k$ of agent $k$ is marked by the area highlighted in gray.}
    \label{fig: sml architecture}
\end{figure}

The SML strategy involves two learning phases, as depicted in Fig. \ref{fig: sml architecture}. In the \emph{training} phase on the left, each agent trains a classifier independently using a finite set of labeled samples within a supervised learning framework (such as logistic regression, neural networks, or other convenient frameworks). The purpose of this phase is to learn some discriminative information that allows agents to distinguish different hypotheses. The output of each trained classifier is used to form a local decision statistic for inference in the form of a log-likelihood ratio \cite{matta2018estimation,dedecius2018bayesian}. In the \emph{prediction} phase on the right, agents receive \emph{streaming} unlabeled samples and implement a social learning protocol based on the trained classifiers to infer the true state. With the well-established performance guarantees for both supervised learning and more recent social learning solutions, it is expected that the SML strategy, which combines the benefits of both approaches, should be able to deliver correct learning with high probability for a sufficient number of training samples. 

To support this claim, the work \cite{bordignon2023learning} has provided a rigorous theoretical analysis on the probability of consistent training concerning the \emph{asymptotic} truth learning in the prediction phase, and also illustrated the excellent classification performance of the SML strategy through extensive supporting simulations. As we will show in the main body of this paper, the probability of consistent training derived in \cite{bordignon2023learning} provides an upper bound for the probability of error in the \emph{infinite}-sample case, namely, the probability of correct decisions when the number of unlabeled samples grows indefinitely in the prediction phase. A series of interesting questions naturally emerge in this context. For instance, what would the learning performance of the SML strategy be in the \emph{finite}-sample case, which is common in practice? How much data do we need during both training and prediction to achieve a prescribed learning performance? This work answers these two questions by developing a \emph{non-asymptotic} performance analysis for the SML strategy. There are both practical and theoretical interests for this kind of investigation. First, collecting abundant samples is generally a time-consuming and expensive task, therefore learning with a \emph{limited} number of samples is desirable when possible. Second, the non-asymptotic performance analysis with respect to (w.r.t.) the number of samples provides a refined characterization of the learning behavior of the SML strategy and, in particular, leads to insights on the convergence rate of truth learning. However, as the arguments in this paper reveal, the derivations tend to be challenging but will ultimately lead to revealing insights. 

Specifically, in this paper, we will study the binary decision-making problem where the agents receive a finite number of unlabeled samples for inference. For clarity, we will refer to this scenario as the \emph{statistical classification} problem since the task is to classify a sequence of samples. 

{\bf \emph{Related works.}} The \emph{non-streaming} statistical classification problem where all unlabeled samples are provided at once by a testing sequence rather than arriving in a streaming manner, has been widely studied in the literature. One typical solution method is the type-based test that compares the closeness between the empirical distributions of the training sequence and the testing sequence \cite{gutman1989asymptotically,devroye2002note}. Due to the reliance on empirical distributions, the alphabet of the sources in these works is assumed to be either finite or the growth rate of the alphabet size is constrained \cite{kelly2012classification}. ML-based methods have also been employed for this non-streaming setting in \cite{braca2022statistical}, where a neural network is trained to approximate the decision statistics needed for log-likelihood ratio tests. The empirical mean of the decision statistic (i.e., the output of the trained classifier) associated with each sample in the testing sequence is used for classification. In this way, the constraint of a finite alphabet in the type-based methods is circumvented. All the above works \cite{gutman1989asymptotically,devroye2002note,kelly2012classification,braca2022statistical} focus on the single-agent scenario, where there is only one decision maker for classification. Different from them, the SML framework of this paper applies more broadly to \emph{multi}-agent decision-making problems in the \emph{streaming} setting. The combination of the qualifications ``multi-agent,''  ``streaming,'' and ``finite amount of data''  for decision making adds a layer of complexity that we show how to address in this work. 

\emph{Notation}: We use boldface fonts to denote random variables, and normal fonts for their realizations, e.g., $\bm{x}$ and $x$. $\mathbb{E}$ and $\mathbb{P}$ denote the expectation and probability operators, respectively. Since this work studies the operation of the learning process during both the \emph{training} and \emph{prediction} phases, we need to distinguish between variables appearing in both phases. We will use the tilde symbol $\sim$ to differentiate between the variables. For example, feature vectors used during training will be topped by the symbol $\sim$ and denoted by $\widetilde{h}$, while feature vectors used during prediction will be denoted simply by $h$ without the $\sim$ symbol. The same convention applies to all other variables in both stages of learning (training and prediction). A table summarizing the main symbols used in the paper is provided in the supplimentary material \cite{supp}.  

\section{The social machine learning (SML) strategy}
\label{sec:sml}

We first review the SML strategy and use this opportunity to familiarize the reader with the notation used for both stages of learning. We thus consider a network of $K$ connected agents or classifiers indexed by $k\in\mathcal{K}\triangleq\{1,2,\dots,K\}$, \label{notation: K} trying to solve a binary classification task. The agents are \emph{heterogeneous} in that their observations may follow different statistical models even when they are attributed to the same class. An example of this situation arises in multi-view learning \cite{zhao2017multi}, where each agent observes a different perspective of the same phenomenon and seeks to uncover the underlying state.  We denote the set of binary hypotheses by $\Gamma\triangleq\{+1,-1\}$. \label{notation: Gamma} In the independent training phase shown on the left of Fig. \ref{fig: sml architecture}, each agent $k$ has a set of $N_k$ labeled examples consisting of pairs $\{(\widetilde{\h}_{k,n},\widetilde{\gammab}_{k,n})\}_{n=1}^{N_k}$,\label{notation: training sample} where $\widetilde{\h}_{k,n}$ is the $n$-th feature vector and $\widetilde{\gammab}_{k,n}\in\Gamma$ is the corresponding label. The feature space of agent $k$ is denoted by $\mathcal{H}_k$\label{notation: feature space}. The pair $(\widetilde{\h}_{k,n},\widetilde{\gammab}_{k,n})$ is distributed according to 
\begin{equation}\label{eq: training sample distribution}
    (\widetilde{\h}_{k,n},\widetilde{\gammab}_{k,n})\sim\widetilde{p}_k(h,\gamma)
\end{equation}
which we factor according to Bayes rule into two equivalent forms
\begin{equation}
    \widetilde{p}_k(h,\gamma) = \widetilde{p}_k(h) \times  \widetilde{p}_{k}(\gamma|h)=\widetilde{p}_k(\gamma)\times L_k(h|\gamma)
\end{equation}
where we are denoting the marginal distributions of the label and feature data by $\widetilde{p}_k(\gamma)$ and $\widetilde{p}_k(h)$, and the conditional distributions by $\widetilde{p}_k(\gamma|h)$ and $L_k(h|\gamma)$. We refer to this last distribution  as the \emph{likelihood model}: it explains how the feature data are generated for each label. This likelihood function (i.e., the generative model) is \emph{unknown} and we are not using a tilde symbol on top of it because we assume that the same generative model applies during training and prediction, which is a natural assumption for the statistical classification problem. Although unnecessary, we assume that the training set is balanced, i.e., $\widetilde{p}_k(\gamma)=\frac{1}{2}$. The purpose of the training phase is to learn the likelihood functions, which can then be used during the prediction phase to classify new feature vectors. More specifically, for classification purposes, it is sufficient to learn the log-ratio of the posterior probabilities (also known as the logit function or the log-odds \cite{bishop2006pattern}), namely, the quantity
\begin{equation}\label{eq: logit function}
    c_k(h)\triangleq\log\frac{\widetilde{p}_k(+1|h)}{\widetilde{p}_k(-1|h)}.
\end{equation}
Once learned, this decision statistic can be used by agent $k$ to label feature vector $h$. For example, we know that the optimal Bayes classifier would assign $h$ to class $\gamma=+1$ if $c_k(h)$ is positive and to class $\gamma=-1$ otherwise. Under the uniform prior condition, $\widetilde{p}_k(\gamma)=\frac{1}{2}$, the above log-odds reduces to the log-likelihood ratio:
\begin{equation}\label{eq: log-likelihood}
    c_k(h)=\log\frac{L_k(h|+1)}{L_k(h|-1)}.
\end{equation}
This confirms that it is sufficient for the SML strategy to compute  the log-likelihood ratio $c_k(h)$ in \eqref{eq: log-likelihood} to be able to perform classification. However, since the likelihood models are unknown, $c_k(h)$ is not available and will need to be learned. We explain next how this can be done by reviewing briefly the construction from \cite{bordignon2023learning} for the training phase.

\subsection{Training phase}
During the training phase, each agent will focus on learning $c_k(h)$ by approximating the posterior probabilities for each class, denoted by $\widehat{p}_k(+1|h)$ and $\widehat{p}_k(-1|h)$, and using \eqref{eq: logit function}. For example, each agent could use its labeled samples to train a logistic classifier or a neural network. The output of the classifier used at this agent could then be utilized to approximate $c_k(h)$. We denote this approximation
\begin{equation}\label{eq: approximate logit function f_k}
    f_k(h)\triangleq\log\frac{\widehat{p}_k(+1|h)}{\widehat{p}_k(-1|h)} \qquad \qquad (\text{initial approximation for $c_k(h)$})
\end{equation} 
where the function $f_k$ belongs to some admissible class $\mathcal{F}_k:\mathcal{H}_k\mapsto\mathbb{R}$\label{notation: function class}. The form of the class $\mathcal{F}_k$ will depend on the type of classifiers used by each agent. For example, when agent $k$ trains a logistic classifier with parameter $w$, the class ${\cal F}_k$ will consist of linear functions parameterized by $w$, i.e., $f_k(h)=w^\top h$. In principle, the quantity $f_k(h)$ obtained in this manner serves as an approximation for the log-odds \eqref{eq: logit function} or \eqref{eq: log-likelihood}. We could have denoted it by $\widehat{c}_k(h)$. However, as we proceed to explain, the function $f_k(h)$ will need to be centered by subtracting a bias term from it, after which we will be able to obtain the desired estimate for $c_k(h)$ --- see the construction \eqref{eq: trained classifier} below. First, we explain how to determine $f_k(h)$. By definition \eqref{eq: approximate logit function f_k}, $f_k$ is a function of the output of the classifier used by agent $k$. In order to learn parameters of the classifier that ultimately determine the best function $\widetilde{\bm{f}}_k$, it is common to minimize some empirical risk function based on the training data, such as
\begin{equation}\label{eq: trained f_k}
    \widetilde{\bm{f}}_k\triangleq\argmin_{f_k\in\mathcal{F}_k}\widetilde{\bm{R}}_{k,{\rm emp}}(f_k),
\end{equation}
where $\widetilde{\bm{R}}_{k,{\rm emp}}(f_k)$ is the risk function used during training by agent $k$ and defined in terms of some loss function $\Phi(\cdot)$ applied to the training samples, namely, 
\begin{equation}\label{eq: empirical risk}
    \widetilde{\bm{R}}_{k,{\rm emp}}(f_k)\triangleq\frac{1}{N_k}\sum_{n=1}^{N_k}\Phi\left({\widetilde{\gammab}_{k,n}f_k(\widetilde{\h}_{k,n})}\right).
\end{equation}
In this formulation, the function $f_k$ applied to the feature vector $\widetilde{\h}_{k,n}$ generates an estimate for the true label $\widetilde{\gammab}_{k,n}$. As is common for many loss functions in learning theory, the loss tends to be a function of the product of the true label and its estimate, as seen in the argument of $\Phi(\cdot)$ in \eqref{eq: empirical risk}. This form holds for logistic classifiers as well as for softmax constructions in neural networks. We impose the following standard condition on the loss function $\Phi(\cdot)$.
\begin{assump}[\textbf{Conditions on the loss function}]\label{assump: risk function}
    The loss function $\Phi(\cdot):\mathbb{R}\mapsto\mathbb{R}_{+}$ is convex, non-increasing and differentiable at $0$ with $\Phi^\prime(0)<0$. Also, it is $L_{\Phi}$-Lipschitz. \qed
\end{assump}
\noindent This assumption guarantees that the loss function $\Phi$ is \emph{classification-calibrated} (see Theorem 2 in \cite{bartlett2006convexity}). And therefore, the resulting function $\widetilde{\bm{f}}_k$ will be Bayes-risk optimal,  meaning that the sign of $\widetilde{\bm{f}}_k(h)$ and $c_k(h)$ will be the same for any feature vector $h\in\mathcal{H}_k$ when the number of training samples $N_k$ goes to infinity. Due to this useful property, classification-calibrated loss functions are extensively utilized for classification tasks in the literature. The work \cite{bordignon2023learning} focused solely on the logistic loss $\Phi(x)=\log(1+e^{-x})$, which satisfies all the conditions of Assumption \ref{assump: risk function} with $L_\Phi = 1$. However, in addition to the logistic loss, Assumption \ref{assump: risk function} covers many other widely-used loss functions, including the exponential loss $\Phi(x)=e^{-x}$ used in boosting algorithms \cite{freund1997decision,friedman2000additive}, and the hinge loss $\Phi(x)=\max(0,1-x)$ used in support vector machines \cite{cortes1995support}. In this paper, we will consider generic loss functions $\Phi(\cdot)$ under Assumption \ref{assump: risk function} and will not be limited to the logistic loss.  We also consider more general classifier structures that belong to a general class of \emph{bounded} real-valued functions $\mathcal{F}_k$. The following assumption on the function $f_k$ is imposed.
\begin{assump}[\textbf{Boundedness of functions}]\label{assump: bound}
    There exists a constant $\beta>0$ such that $\forall h\in\mathcal{H}_k$, $\abs{f_k(h)}\leq\beta$ for each agent $k\in\mathcal{K}$ and each function $f_k\in\mathcal{F}_k$.  \qed
\end{assump}
\noindent After the training phase, the learned function $\widetilde{\bm{f}}_k$ resulting from solving \eqref{eq: trained f_k} at agent $k$ (e.g., by using a stochastic gradient algorithm or backpropagation) can turn out to be biased. This means the following. If we refer to \eqref{eq: logit function}, we expect the log-odds to be positive when the true label is $+1$ and negative otherwise. In other words, if evaluated over many feature vectors $h\in{\cal H}_k$, we expect the values of $c_k(h)$ to be more or less uniformly distributed between positive and negative values. Due to biases and distortions introduced during training, this ``uniform'' split may be perturbed. One useful step is to center the learned log-odds by removing its mean. For this reason, the ultimate estimate for $c_k(h)$ is the following centered quantity, which we now denote by $\widetilde{\c}_k(h)$: 
\begin{equation}\label{eq: trained classifier}
    \widetilde{\bm{c}}_k(h)\triangleq\widetilde{\bm{f}}_k(h)-\widetilde{\bm{\mu}}_k(\widetilde{\bm{f}}_k)\qquad \qquad (\text{\rm ultimate approximation for $c_k(h)$})
\end{equation} 
where $\widetilde{\bm{\mu}}_k(\widetilde{\bm{f}}_k)$ is the \emph{empirical training mean} calculated from the training data:
\begin{equation}\label{eq: empirical training mean}
    \widetilde{\bm{\mu}}_k(f_k)\triangleq\frac{1}{N_k}\sum_{n=1}^{N_k}f_k(\widetilde{\h}_{k,n}),\quad \forall f_k\in\mathcal{F}_k.
\end{equation}
Discounting the empirical training mean was already suggested in \cite{bordignon2023learning} as a \emph{debiasing} operation to mitigate possible biased models resulting from the training process. In summary, the objective of the training phase is to use the training data $(\widetilde{\h}_{k,n},\widetilde{\gammab}_{k,n})$ to arrive at the classifier models $\{\widetilde{\c}_k\}$; one for each agent. This construction is shown on the left part of Fig. \ref{fig: sml architecture}. Once the models $\{\widetilde{\c}_k\}$ are learned during training, they are frozen and will be used subsequently during the prediction phase. 

\subsection{Prediction phase}
In the prediction (testing) phase, the dispersed agents work jointly to solve a binary classification problem using their learned models. Specifically, at each time $i$, each agent $k$ receives a new feature vector $\h_{k,i}\in\mathcal{H}_k$, so that each agent $k$ has access to a growing stream of feature vectors $\bm{h}_{k,1}, \bm{h}_{k,2}, \dots$, which are identically and independently distributed (i.i.d.) according to some unknown likelihood model $L_k(\cdot|\gammab_0)$, where $\gammab_0\in\Gamma$ is the \emph{true unknown state} in force during the prediction phase. The objective of the prediction phase is to determine whether $\gammab_0=+1$ or $\gammab_0=-1$ for the streaming data. Let
    \begin{equation}\label{eq: random vector of observations}
        \bm{\mathsf{h}}_{i}\triangleq\text{col}\big\{\h_{1,i},\h_{2,i},\dots,\h_{K,i} \big\}
    \end{equation}
denote the collection of observations received by all agents at time $i$. This is the snapshot of the input to the graph at that time instant. Then, $\bm{\mathsf{h}}_i$ is an i.i.d. vector taking values in $\prod_{k=1}^{K}\mathcal{H}_k$ and distributed as $\prod_{k=1}^K L_k(\cdot|\gammab_0)$. To solve the classification problem under infinite samples, reference \cite{bordignon2023learning} devised a distributed learning rule inspired by the social learning studies \cite{shahrampour2016distributed,nedic2017fast,salami2017social,lalitha2018social,matta2020Interplay,kayaalp2022arithmetic}. We briefly describe the basic framework here.

Each agent $k$ is assumed to have access to some likelihood models $\{L_k(h|\gamma)\}$, for each $\gamma\in\Gamma$. These functions model the agent's assumption of how the observations are generated. Actually, as the argument will show, the agents do not need to know the individual likelihoods $L_k(h|+1)$ and $L_k(h|-1)$ but only their ratio. And as already explained in the previous section, this ratio was learned during the training phase and represented by the quantity $\widetilde{\c}_k(h)$. We describe the social learning algorithm first by assuming knowledge of the individual likelihood functions $L_k(h|\gamma)$, but soon thereafter explain how they can be replaced by their estimated ratio and continue the discussion from there.

The standard social learning rule relies on iterative \emph{adaptation} and \emph{combination} steps. Let $\pib_{k,i}$ denote the belief vector of agent $k$ at time $i$, which is a probability mass function over the set of hypotheses $\Gamma$. In other words, $\bm{\pi}_{k,i}$ is a $2\times 1$ vector and each of its entries represents the confidence that agent $k$ has at time $i$ about whether the true label is $\gammab_0=+1$ or $\gammab_0=-1$. We will use the notation $\bm{\pi}_{k,i}(\gamma)$ to index the 2 entries of the belief vector. In the adaptation step, each agent uses the Bayes rule to incorporate information about the new observation $\h_{k,i}$ into the agent's \emph{intermediate} belief vector denoted by $\psib_{k,i}$:
\begin{equation}\label{eq: SL-adaptation}
    \psib_{k,i}(\gamma)=\frac{\pib_{k,i-1}(\gamma)L_k(\h_{k,i}|\gamma)}{\sum_{\gamma^\prime \in \Gamma} \pib_{k,i-1}(\gamma^\prime)L_k(\h_{k,i}|\gamma^\prime)},\quad \forall \gamma\in\Gamma=\{+1,-1\}.
\end{equation}
In the subsequent combination step, each agent aggregates the  intermediate beliefs from its neighbors using a certain pooling protocol. One representative protocol is the geometric rule described by
\begin{equation}\label{eq: SL-combination}
    \pib_{k,i}(\gamma)=\frac{\exp{\sum_{\ell\in\mathcal{N}_k} a_{\ell k} \log \psib_{\ell,i}(\gamma)}}{\sum_{\gamma^\prime\in\Gamma} \exp{\sum_{\ell\in\mathcal{N}_k} a_{\ell k} \log \psib_{\ell,i}(\gamma^\prime)}}
\end{equation}
for each $\gamma\in\Gamma$. The combination weights $a_{\ell k}$ that agent $k$ assigns to its neighbors $\ell$ satisfy: $\sum_{\ell=1}^K a_{\ell k}=1$, and $a_{\ell k}> 0$ for all $\ell\in \mathcal{N}_k$. In particular, it holds that $a_{\ell k}=0$ if $\ell\notin\mathcal{N}_k$, where $\mathcal{N}_k$ denotes the neighboring set of agent $k$ (see Fig. \ref{fig: sml architecture}). We introduce the following assumption on the network topology.
\begin{assump}[\textbf{Strongly-connected graph}]\label{assump: network}
    The underlying graph of the network is strongly connected. That is, there exist paths with positive combination weights between any two distinct agents in both directions (these trajectories need not be the same), and at least one agent has a self-loop, i.e., $a_{mm}>0$ for some agent $m$  \cite{sayed2014adaptation}. \qed
\end{assump}
\noindent Under this assumption and from the Perron-Frobenius theorem \cite{sayed2014adaptation,sayed2022inference},  we know that the $K\times K$ combination matrix $A=[a_{\ell k}]$ is primitive and has a Perron eigenvector $p$ satisfying:
\begin{equation}\label{eq: Perron eigenvector}
    Ap=p,\quad \sum_{k=1}^{K}p_k=1,\quad p_k>0, \quad \forall k\in\mathcal{K}.
\end{equation}
Furthermore, the second largest-magnitude eigenvalue of $A$, denoted by $\sigma$, is strictly smaller than 1. To see how the individual likelihoods can be replaced by their ratio, we remark that the social learning rule \eqref{eq: SL-adaptation}--\eqref{eq: SL-combination} can be written in a compact form by introducing the log-belief and log-likelihood ratios between labels $+1$ and $-1$, denoted by:
\begin{equation}\label{eq: log-belief ratio}
    \lambdab_{k,i} \triangleq\log\frac{\pib_{k,i}(+1)}{\pib_{k,i}(-1)},\quad c_k(\h_{k,i})\triangleq\log\frac{L_k(\h_{k,i}|+1)}{L_k(\h_{k,i}|-1)}.
\end{equation}
It is clear that the sign of $\lambdab_{k,i}$ represents the preference of agent $k$ for classification at time $i$: label $+1$ (or $-1$) will be selected if $\lambdab_{k,i}$ is positive (or negative). Combining \eqref{eq: SL-adaptation} and \eqref{eq: SL-combination}, we find that $\lambdab_{k,i}$ evolves according to the recursion
\begin{equation}\label{eq: SL-log-belief-ratio}
    \lambdab_{k,i}= \sum_{\ell=1}^{K} a_{\ell k}\big(\lambdab_{\ell,i-1}+c_{\ell}({\h}_{\ell,i})\big).
\end{equation}
As we explained before, the log-likelihood ratio is learned during the training phase. By replacing $c_{\ell}({\h}_{\ell,i})$ with the estimate $\widetilde{\bm{c}}_{\ell}(\h_{\ell,i})$ from \eqref{eq: trained classifier}, the following social learning rule is therefore employed in the prediction phase (where we continue to use the $\lambdab$ notation to avoid an explosion in symbols):
\begin{equation}\label{eq: sl learning rule}
    \lambdab_{k,i}= \sum_{\ell=1}^{K} a_{\ell k}\big(\lambdab_{\ell,i-1}+\widetilde{\bm{c}}_{\ell}({\h}_{\ell,i})\big).
\end{equation}
One notable feature of the learning rule \eqref{eq: sl learning rule} is that the information from the local observations is aggregated over both space (through $\ell$) and time (through $i$), which strengthens the decision-making capabilities of the agents. We provide an algorithmic description of the SML strategy in the supplementary material \cite{supp}. An important question now is to examine the performance guarantees of strategy \eqref{eq: sl learning rule} under the challenging constraint of the error introduced due to replacing the true log-likelihood ratio $\c_{\ell}$ by its approximation $\widetilde{\c}_{\ell}$.

\section{Consistency of the SML strategy}
\label{sec: consistency}
To being with, we recall that in \cite{bordignon2023learning} the SML strategy \eqref{eq: sl learning rule} was said to be \emph{consistent} if asymptotic truth learning in the prediction phase is attained, namely, if the true state $\gammab_0$ is learned by all agents when the number of observations (i.e., feature vectors $\{\h_{k,i}\}$) goes to \emph{infinity}. To examine consistency, we first review an important conclusion about the social learning rule \eqref{eq: sl learning rule} provided in  \cite{lalitha2018social,bordignon2023learning} in the asymptotic regime, namely that\footnote{This condition holds due to Assumption \ref{assump: bound}, which ensures that the log-likelihood ratio estimates $\{\widetilde{\bm{c}}_k(\h_{k,i})\}$ are bounded.}
\begin{equation}\label{eq: asymptotic convergence}
    \frac{1}{i}\lambdab_{k,i}\overset{\text{a.s.}}{\longrightarrow}\sum_{\ell=1}^Kp_{\ell}\mathbb{E}_{\gammab_{0}}\widetilde{\bm{c}}_{\ell}(\h_{\ell,i})\triangleq \widehat{\lambdab}_{\rm asym}
\end{equation}
where ``a.s.'' means almost sure convergence and $\mathbb{E}_{\gammab_{0}}$ denotes the expectation operator w.r.t. the true but unknown likelihoods $L_\ell(\cdot|\gammab_{0})$. We refer to the quantity on the right-hand side as an \emph{asymptotic decision statistic} and denote it by $\widehat{\lambdab}_{\rm asym}$. Then, the SML strategy is consistent when $\gammab_{0}\widehat{\lambdab}_{\rm asym}>0$ is satisfied. That is, $\widehat{\lambdab}_{\rm asym}>0$ if $\gammab_{0}=+1$, and $\widehat{\lambdab}_{\rm asym}<0$ otherwise. This notion of consistency is important because it suggests a construction to classify the feature vectors by examining the sign of $\widehat{\lambdab}_{\rm asym}$ or, alternatively, by examining a sufficient  condition described next in \eqref{eq: consistent training}.

To avoid confusion, it is worth noting that the trained models $\{\widetilde{\bm{f}}_k\}$ and classifiers $\{\widetilde{\bm{c}}_k\}$ are generated in the training phase, so they are random w.r.t. the training set. Therefore, following the training phase, we can ``freeze'' these quantities, which become deterministic and denoted by the realizations of $\{\widetilde{f}_k\}$ and $\{\widetilde{c}_k\}$. Let $\mu_k^{+}(f_k)$ and $\mu_k^{-}(f_k)$ denote the conditional means of a function $f_k\in\mathcal{F}_k$ under the two classes:
\begin{subequations}
    \begin{equation}\label{eq: expectation +1}
        \mu_k^{+}(f_k)\triangleq \mathbb{E}_{+1}f_k(\h_{k,i})=\mathbb{E}_{\h_{k,i}\sim L_k(\cdot|+1)}f_k(\h_{k,i}),
    \end{equation}
    \begin{equation}\label{eq: expectation -1}
        \mu_k^{-}(f_k)\triangleq \mathbb{E}_{-1}f_k(\h_{k,i})=\mathbb{E}_{\h_{k,i}\sim L_k(\cdot|-1)}f_k(\h_{k,i}).
    \end{equation}
\end{subequations}
As explained before, we expect the approximated log-likelihood ratio $f_k$ in \eqref{eq: approximate logit function f_k} to be positive when the class is $+1$ and negative otherwise. Equations \eqref{eq: expectation +1}--\eqref{eq: expectation -1} are computing the expected value for this ratio  for each agent under both classes, $+1$ and $-1$. We can combine the averages across the agents and compute the \emph{network average} weighted by the Perron entries:
\begin{equation}\label{eq: network expectation}
    \mu^{+}(f)\triangleq\sum_{k=1}^{K}p_k\mu_k^{+}(f_k),\quad
    \mu^{-}(f)\triangleq\sum_{k=1}^{K}p_k\mu_k^{-}(f_k),
\end{equation}
where the argument $f$ on the left-hand side refers to the dependence of the network averages $\mu^{+}(\cdot)$ and $\mu^{-}(\cdot)$ on the collection of functions $\{f_k\}$, i.e., $\mu^{+}(f)=\mu^{+}(f_1,\dots,f_K)$ and $\mu^{-}(f)=\mu^{-}(f_1,\dots,f_K)$. Similar notation will be used for other network quantities in this paper. Based on the social learning rule \eqref{eq: sl learning rule} and its convergence property in \eqref{eq: asymptotic convergence}, the following \emph{sufficient} condition for consistency is established in \cite{bordignon2023learning}:
\begin{equation}\label{eq: consistent training}
    \mu^{+}(\widetilde{f})>\widetilde{ \mu}({\widetilde{f}})\quad\text{and}\quad \mu^{-}(\widetilde{f})<\widetilde{ \mu}(\widetilde{ f})
\end{equation} 
where 
\begin{equation}\label{eq: network training mean}
    \widetilde{\mu}(\widetilde{f})\triangleq\sum_{k=1}^{K}p_k \widetilde{\mu}_k(\widetilde{f}_k)
\end{equation}
is the network average of the empirical training means specified in \eqref{eq: empirical training mean}. Condition \eqref{eq: consistent training} states that as long as the mean of the learned classifiers under class +1 is larger than the mean under class $-1$, and the empirical training mean of these classifiers sits in between, then the SML strategy should be able to classify correctly. To understand this condition, we note that by combining the convergence result \eqref{eq: asymptotic convergence} with the definitions \eqref{eq: trained classifier}, \eqref{eq: network expectation}, and \eqref{eq: network training mean}, we have
\begin{equation}\label{eq: convergence under different hypotheses}
    \widehat{\lambdab}_{\rm asym}{=}\begin{cases}
        \mu^{+}(\widetilde{{f}})-\widetilde{{\mu}}(\widetilde{{f}}), & \gammab_{0}=+1\\
        \mu^{-}(\widetilde{{f}})-\widetilde{{\mu}}(\widetilde{{f}}), & \gammab_{0}=-1
    \end{cases}
\end{equation}
and therefore $\gammab_0 \widehat{\lambdab}_{\rm asym} >0$ under condition \eqref{eq: consistent training}. This condition is deemed to be sufficient because the true state $\gammab_0$ remains fixed during the prediction phase. Thus, only the condition $\mu^{+}(\widetilde{f}) > \widetilde{\mu}(\widetilde{f})$ (or $\mu^{-}(\widetilde{f}) < \widetilde{\mu}(\widetilde{f})$) is necessary for the SML strategy to be consistent if $\gammab_0 = +1$ (or $\gammab_0 = -1$).
Now since the description \eqref{eq: consistent training} involves a set of trained models $\{\widetilde{f}_k\}$, the conditions in \eqref{eq: consistent training} depend on the randomness stemming from the training phase. For this reason, these conditions will be referred to as the condition for \emph{consistent training}, namely, for the training set to produce a consistent classifier for the subsequent prediction phase at the end of the training phase. The \emph{probability of consistent training}, denoted by $P_c$, is then defined as follows:
\begin{equation}\label{eq: probability of consistent training}
    P_c\triangleq\mathbb{P}\left(\mu^{+}(\widetilde{\bm{f}})>\widetilde{\bm \mu}({\widetilde{\bm f}}), \mu^{-}(\widetilde{\bm{f}})<\widetilde{\bm \mu}(\widetilde{\bm f})\right)
\end{equation}
where the boldface fonts are used to highlight the randomness in the training phase. Using similar techniques to the ones used in \cite{bordignon2023learning} which considered only the logistic loss, we will show in Theorem \ref{theorem: 1} that $P_c$ is lower bounded by a constant related to the number of training samples $N_k$, the Perron eigenvector $p$ of the combination matrix $A$, the properties of the loss function $\Phi$, and the Rademarcher complexity of the function class $\mathcal{F}_k$. For that purpose, we need to introduce some notation as follows. First, we define the \emph{target risk} during training at every agent $k$ as
\begin{equation}\label{eq: individual target risk}
    \widetilde{\mathsf{R}}_k^o\triangleq\inf_{f_k\in\mathcal{F}_k} \widetilde{R}_k(f_k)
\end{equation}
where $\widetilde{R}_k(f_k)$ is the expected (rather than empirical) risk associated with $f_k$---compare with expression \eqref{eq: empirical risk}:
\begin{equation}\label{eq: expected risk of f_k}
    \widetilde{R}_k(f_k)\triangleq \mathbb{E}_{(\widetilde{\h}_{k,n},\widetilde{\gammab}_{k,n})}\Phi\left(\widetilde{\gammab}_{k,n}f_k(\widetilde{\h}_{k,n})\right).
\end{equation}
Here, the expectation operator is w.r.t. the unknown distribution $\widetilde{p}_k(h,\gamma)$ of the training data $(\widetilde{\h}_{k,n},\widetilde{\gammab}_{k,n})$ described in \eqref{eq: training sample distribution}. The weighted network average of target risks is defined according to
\begin{equation}\label{eq: target risk}
    \mathsf{R}^o\triangleq\sum_{k=1}^Kp_k \widetilde{\mathsf{R}}_k^o.
\end{equation}
We will assume that the network average target risk $\mathsf{R}^o$ is strictly smaller than $\Phi(0)$, which in view of \eqref{eq: expected risk of f_k}, is the risk corresponding to the \emph{uninformative} classifier $f_k=0$. Formally, we assume
\begin{equation}\label{eq: R^o condition}
    \mathsf{R}^o<\Phi(0).
\end{equation}
This condition eliminates having $f_k=0$ as the optimal solution for all $k\in\mathcal{K}$. To understand this condition, we recall the definition of $f_k$ from \eqref{eq: approximate logit function f_k}. Suppose that $f_k(h)=0$ for all $h\in\mathcal{H}_k$, then it will hold that
\begin{equation}
    \widehat{p}_k(\gamma|h)=\frac{1}{2},  \text{ for any } h\in\mathcal{H}_k \text{ and }\gamma\in\Gamma.
\end{equation}
Consequently, agent $k$ will make an \emph{uninformed} decision that randomly assigns the labels $+1$ and $-1$ with equal probability to all feature vectors. In other words, this agent fails to learn any discriminative information for the two classes during the training phase. In comparison, according to definition \eqref{eq: target risk}, condition \eqref{eq: R^o condition} implies that there exists some agent $k$ such that $\widetilde{\mathsf{R}}^o_k<\Phi(0)$. That is, the uninformative classifier $f_k=0$ is not optimal for this agent. Therefore, condition \eqref{eq: R^o condition} ensures that there is \emph{at least one} agent that is able to make a better decision than random guessing.

Next, we introduce some notation related to the \emph{Rademacher complexity} of function classes $\{\mathcal{F}_k\}$ \cite{sayed2022inference,boucheron2005theory}. Let
\begin{equation}
    \widetilde{h}^{(k)}\triangleq\big\{ \widetilde{h}_{k,1},\dots,\widetilde{h}_{k,N_k} \big\}
\end{equation} 
be a fixed sample set of size $N_k$ for agent $k$. Then, the \emph{empirical} Rademacher complexity at agent $k$ for the sample set $\widetilde{h}^{(k)}$ is defined as
\begin{equation}\label{eq: individual empirical Rademacher complexity}
    \widetilde{\mathcal{R}}\Big(\mathcal{F}_k(\widetilde{h}^{(k)})\Big)\triangleq\mathbb{E}_{\bm{r}}\sup_{f_k\in\mathcal{F}_k}\abs{\frac{1}{N_k}\sum_{n=1}^{N_k}\bm{r}_nf_k(\widetilde{h}_{k,n})}
\end{equation}
where $\bm{r}=(\bm{r}_1,\dots,\bm{r}_{N_k})$ is a sequence of i.i.d. Rademacher random variables, namely, $\mathbb{P}(\bm{r}_n=+1)=\mathbb{P}(\bm{r}_n=-1)=1/2$. The quantity $\widetilde{\mathcal{R}}(\mathcal{F}_k(\widetilde{h}^{(k)}))$ measures on average how well the function class $\mathcal{F}_k$ correlates with random noise, and thus describes the richness of $\mathcal{F}_k$. The \emph{expected} Rademacher complexity at agent $k$ is defined as
\begin{equation}\label{eq: individual expected Rademacher complexity}
    \rho_k\triangleq\mathbb{E}_{\widetilde{{\h}}^{(k)}}\widetilde{\mathcal{R}}\Big(\mathcal{F}_k(\widetilde{\bm{h}}^{(k)})\Big),
\end{equation} 
which is the expectation of $\widetilde{\mathcal{R}}(\mathcal{F}_k(\widetilde{h}^{(k)}))$ over all sample sets of size $N_k$ drawn according to \eqref{eq: training sample distribution}. We also define the (expected) \emph{network Rademacher complexity} according to 
\begin{equation}\label{eq: expected network Rademacher complexity}
    \rho\triangleq\sum_{k=1}^Kp_k\rho_k.
\end{equation}
With the above definitions, we can establish a lower bound on the probability of consistent training in \eqref{eq: probability of consistent training}.
\begin{theorem}[\textbf{Probability of consistent training}]\label{theorem: 1}
    Assume $\rho<\mathdutchcal{E}_\Phi(\mathsf{R}^o,0)$, where $\mathdutchcal{E}_\Phi(\mathsf{R}^o,0)$ is defined by \eqref{eq: d_0^star} in \ref{appendix: Lemma}. Under Assumptions \ref{assump: risk function}--\ref{assump: network} and condition \eqref{eq: R^o condition}, the probability of consistent training $P_c$ in \eqref{eq: probability of consistent training} is lower bounded by
    \begin{equation}\label{eq: P_c}
        P_c\geq 1-2\exp\left\{-\frac{8N_{\max}}{\alpha^2\beta^2}\Big(\mathdutchcal{E}_\Phi(\mathsf{R}^o,0)-\rho\Big)^2\right\},
    \end{equation}
    where
    \begin{equation}\label{eq: alpha}
        N_{\max}\triangleq\max_k N_k,\quad \alpha\triangleq\sum_{k=1}^Kp_k\frac{N_{\max}}{N_k},
    \end{equation}
    and $\beta$ is the bound on the function $f_k$ specified in Assumption \ref{assump: bound}.
\end{theorem}
\begin{proof}
    See \ref{appendix: Lemma}.
\end{proof}
\noindent The quantity $\alpha$ is called the \emph{network imbalance penalty}, which quantifies how unequal the numbers of training samples are across different agents. Theorem \ref{theorem: 1} is an extension of the SML consistency result obtained exclusively for the logistic loss $\Phi(x)=\log(1+e^{-x})$ in \cite{bordignon2023learning}, where the expression for $\mathdutchcal{E}_\Phi(\mathsf{R}^o,0)$ was computed explicitly. Deriving the closed form of $\mathdutchcal{E}_{\Phi}(\mathsf{R}^o,0)$ for a general loss function $\Phi$ from Assumption \ref{assump: risk function} is not a trivial task, since it requires us to solve the generic equation \eqref{eq: d_delta^star} in \ref{appendix: Lemma}.
 
The most important implication from Theorem \ref{theorem: 1} is that the probability of consistent training is bounded in an exponential manner if the network Rademacher complexity $\rho$ is smaller than the function $\mathdutchcal{E}_\Phi(\mathsf{R}^o,0)$. According to \eqref{eq: mathdutch_E_R_delta definition} in \ref{appendix: Lemma}, the following relation holds for $\mathdutchcal{E}_\Phi(\mathsf{R}^o,0)$:
\begin{equation}
    \mathdutchcal{E}_\Phi(\mathsf{R}^o,0)\triangleq\frac{d_0^\star}{4}=\frac{\Phi(d_0^\star)-\mathsf{R}^o}{8L_\Phi}
\end{equation}
where $d_0^\star$ is derived by solving  \eqref{eq: d_delta^star}. Using a first-order approximation for the function $\Phi$ around $0$, we have
\begin{equation}
    \Phi(d_0^\star)\approx \Phi(0)+\Phi^\prime(0)d_0^\star,
\quad\text{and}\quad
    d_0^\star\approx \frac{\Phi(0)-\mathsf{R}^o}{2L_\Phi - \Phi^\prime(0)}.
\end{equation}
As already discussed for \eqref{eq: R^o condition}, the risk $\Phi(0)$ corresponds to the uninformative classifier, i.e., to the case of classification by randomly guessing. Therefore, the quantity $\Phi(0)-\mathsf{R}^o$ captures the difficulty of the binary classification task for the network. The closer the target risk $\mathsf{R}^o$ is to the uninformative risk $\Phi(0)$, the smaller the value of $\mathdutchcal{E}_\Phi(\mathsf{R}^o,0)$ and consequently, the more restricted (due to the assumption of $\rho<\mathdutchcal{E}_\Phi(\mathsf{R}^o,0)$) the complexity of the classifier structure will be. Therefore, expression \eqref{eq: P_c} reveals a remarkable interplay between the inherent difficulty of the classification problem (quantified by $\mathdutchcal{E}_\Phi(\mathsf{R}^o,0)$) and the complexity of the classifier structure (quantified by $\rho$).

The \emph{probability of error} achieved by the SML strategy, denoted by $P_e$, is defined as the probability of inconsistent learning:
\begin{equation}\label{eq: P_e definition}
    P_e\triangleq\mathbb{P}\Big(\gammab_{0}\widehat{\lambdab}_{\rm asym}\leq 0\Big)
\end{equation}
where the randomness stems from both the training phase (i.e., the training set) and the prediction phase (i.e., the true label $\gammab_{0}$). We next show that an upper bound for $P_e$ can be obtained from the probability of consistent training $P_c$. Denoting the prior for $\gammab_{0}$ by $\mathbb{P}(\gammab_{0})$, expression \eqref{eq: convergence under different hypotheses} yields
\begin{align}
    \nonumber
    P_e\overset{\eqref{eq: P_e definition}}{=}&\;\mathbb{P}(\gammab_0=+1)\mathbb{P}\Big(\widehat{\lambdab}_{\rm asym}\leq 0 | \gammab_{0}=+1\Big)+\mathbb{P}(\gammab_0=-1)\mathbb{P}\Big(\widehat{\lambdab}_{\rm asym}\geq 0 | \gammab_{0}=-1\Big)\\
    \nonumber
    \overset{\eqref{eq: convergence under different hypotheses}}{=}&\; \mathbb{P}(+1)\mathbb{P}\Big(\mu^{+}(\widetilde{\bm{f}})-\widetilde{\bm{\mu}}(\widetilde{\bm{f}})\leq 0\Big)+\mathbb{P}(-1)\mathbb{P}\Big(\mu^{-}(\widetilde{\bm{f}})-\widetilde{\bm{\mu}}(\widetilde{\bm{f}}) \geq 0\Big)\\
    \label{eq: P_e upper bound by 1-P_c}
    {\leq}\;&\;\mathbb{P}(+1)(1-P_c)+\mathbb{P}(-1)(1-P_c)=1-P_c
\end{align}
where the inequality is due to the definition \eqref{eq: probability of consistent training} of $P_c$, which guarantees:
\begin{equation}
    1-P_c=\mathbb{P}\Big(\Big\{ \mu^{+}(\widetilde{\bm{f}})-\widetilde{\bm{\mu}}(\widetilde{\bm{f}})\leq 0 \Big\} \cup \Big\{\mu^{-}(\widetilde{\bm{f}})-\widetilde{\bm{\mu}}(\widetilde{\bm{f}}) \geq 0 \Big\} \Big).
\end{equation}
Therefore, in the asymptotic regime where the number of observations grows indefinitely during the prediction phase, the probability of error attained by the SML strategy is upper bounded by $1-P_c$, which can be further refined using Theorem \ref{theorem: 1}.

\section{Non-asymptotic performance for statistical classification tasks}
\label{sec: statistical classification}

In this section, we analyze the probability of error for the classification task assuming a \emph{finite number} of observations. In this setting, the agents try to identify the true label $\gammab_0$ given a finite sequence of streaming feature vectors: 
\begin{equation}
    \h_{k,1},\h_{k,2},\dots,\h_{k,S}
\end{equation} 
where $S$ is the size of the stream of unlabeled samples in the prediction phase. It is notable that when $S$ tends to infinity, we will recover the classical social learning task \cite{bordignon2023learning}, whose probability of error can be upper bounded by $1-P_c$ as shown in \eqref{eq: P_e upper bound by 1-P_c}. Since $S$ is finite, a non-asymptotic performance analysis for the distributed learning rule \eqref{eq: sl learning rule} is needed. To this end, we characterize the \emph{instantaneous} probability of error for each agent $k$. Let $\gammab_{k,i}$ denote agent $k$'s decision at time $i$. It is obvious that $\gammab_{k,i}$ depends on the observations received by the network up to time $i$. Without loss of generality, we assume a uniform initial belief for the distributed learning rule \eqref{eq: sl learning rule} so that $\lambdab_{k,0}=0,\forall k\in\mathcal{K}$.

At each time $i$, the agents make a decision according to the sign of their log-belief ratios, i.e., $\gammab_{k,i}\triangleq\sign(\lambdab_{k,i})$. Hence, a misclassification occurs at agent $k$ if $\lambdab_{k,i}$ and $\gammab_{0}$ have different signs. Let $P_{k,i}^{e}$ denote the instantaneous probability of error associated with agent $k$ at time $i$:
\begin{equation}\label{eq: instantaneous probability of error}
    P_{k,i}^{e}\triangleq\mathbb{P}\big(\gammab_{0}\lambdab_{k,i}\leq 0\big).
\end{equation}
Before analyzing \eqref{eq: instantaneous probability of error}, we need to introduce another condition related to the performance of the training phase, which we will refer to as the \emph{$\delta$-margin consistent training} condition. This condition is similar to the consistent training condition in \eqref{eq: consistent training} established for the infinite-sample classification problem, which connects the performance of the training phase (i.e., $P_c$) with that of the prediction phase (i.e., $P_e$) through \eqref{eq: P_e upper bound by 1-P_c}. We will show that the $\delta$-margin consistent training condition plays a key role in our subsequent non-asymptotic performance analysis. Formally, the $\delta$-margin consistent training condition, given a set of trained models $\{\widetilde{f}_k\}$, is expressed by:
\begin{equation}\label{eq: strong condition for consistent training}
    \mu^{+}(\widetilde{f})>\widetilde{ \mu}({\widetilde{f}})+\delta\quad\text{and}\quad \mu^{-}(\widetilde{f})<\widetilde{ \mu}(\widetilde{f})-\delta
\end{equation} 
where $\delta\geq 0$\label{notation: margin} is a non-negative constant. In view of \eqref{eq: convergence under different hypotheses}, the parameter $\delta$ describes the distance between the asymptotic decision statistic $\widehat{\lambdab}_{\rm asym}$ and the decision boundary $0$, which we will refer to as the \emph{decision margin}. To avoid confusion, we note that $\delta$ is a design parameter in condition \eqref{eq: strong condition for consistent training}. However, for each specified learning setup, we can evaluate the value of $\delta$ that is achieved by the SML strategy by calculating the value of the variables $\mu^{+}(\widetilde{f})$, $\mu^{-}(\widetilde{f})$, and $\widetilde{ \mu}(\widetilde{f})$.

It is clear that for any positive $\delta>0$, the $\delta$-margin consistent training condition \eqref{eq: strong condition for consistent training} is stronger than the consistent training condition given by \eqref{eq: consistent training}.  Let $P_{c,\delta}$ denote the probability of $\delta$-margin consistent training:
\begin{equation}\label{eq: probability of delta-margin consistent training}
    P_{c,\delta}\triangleq\mathbb{P}\left( \mu^{+}(\widetilde{\bm{f}})>\widetilde{\bm \mu}({\widetilde{\bm f}})+\delta, \mu^{-}(\widetilde{\bm{f}})<\widetilde{\bm \mu}(\widetilde{\bm f})-\delta \right).
\end{equation}
Here, we use boldface fonts to emphasize the fact that condition \eqref{eq: strong condition for consistent training} depends on the randomness coming from the training phase. Notice that $P_{c,\delta}\leq P_c$ and $P_{c,0}=P_c$. A lower bound on $P_{c,\delta}$ is obtained as follows.
\begin{theorem}[\textbf{Probability of $\delta$-margin consistent training}] \label{theorem: 2}
    Assume that $0\leq\delta<\delta_{\max}$ and $\rho<\mathdutchcal{E}_\Phi(\mathsf{R}^o,\delta)$, where the definitions of $\delta_{\max}$ and $\mathdutchcal{E}_\Phi(\mathsf{R}^o,\delta)$ are respectively given by \eqref{eq: delta_max definition} and \eqref{eq: definition of E(R,delta)} in \ref{appendix: Lemma}. Then, under Assumptions \ref{assump: risk function}--\ref{assump: network} and condition \eqref{eq: R^o condition}, the probability of $\delta$-margin consistent training in \eqref{eq: probability of delta-margin consistent training} is lower bounded by:
    \begin{equation}\label{eq: P_c_delta}
        P_{c,\delta}\geq 1- 2\exp\left\{-\frac{8N_{\max}}{\alpha^2\beta^2}\Big(\mathdutchcal{E}_\Phi({\mathsf{R}^o},\delta)-\rho\Big)^2\right\}.
    \end{equation}	
\end{theorem}
\begin{proof}
    See \ref{appendix: Lemma}.
\end{proof}
\noindent Comparing Theorems \ref{theorem: 1} and \ref{theorem: 2}, we see that the lower bounds on $P_c$ and $P_{c,\delta}$ differ only in the term $\mathdutchcal{E}_\Phi({\mathsf{R}^o},\delta)$. By choosing $\delta=0$, we recover \eqref{eq: P_c} from \eqref{eq: P_c_delta}. According to \eqref{eq: comparison between delta and 0} in \ref{appendix: Lemma}, we have:
\begin{equation}
    \mathdutchcal{E}_\Phi({\mathsf{R}^o},\delta)=\mathdutchcal{E}_\Phi({\mathsf{R}^o},0)+\frac{\Phi(d_\delta^\star)-\Phi(d_0^\star)}{8L_\Phi}
\end{equation}
where $d_\delta^\star$ is the solution to equation \eqref{eq: d_delta^star} given in \ref{appendix: Lemma}. Furthermore, since $d_\delta^\star$ increases with $\delta$ as proved in \ref{appendix: Lemma} and $\Phi$ is non-increasing under Assumption \ref{assump: risk function}, the function $\mathdutchcal{E}_\Phi({\mathsf{R}^o},\delta)$ is also non-increasing with $\delta$. Accordingly, the lower bound on $P_{c,\delta}$ is not increased when a larger decision margin $\delta$ is desired. Moreover, we can carry out a sample complexity analysis for the training phase using \eqref{eq: P_c_delta}, as stated in the forthcoming corollary.
\begin{corollary}[\textbf{Training sample complexity}] 
    Assume that $0\leq\delta<\delta_{\max}$ and $\rho<\mathdutchcal{E}_\Phi(\mathsf{R}^o,\delta)$. For any $\varepsilon_{\rm tr}>0$, the $\delta$-margin consistent training condition \eqref{eq: strong condition for consistent training} holds with probability at least $1-\varepsilon_{\rm tr}$ if the maximum number of training samples across the agents satisfies
    \begin{equation}
        N_{\max}\geq\frac{\alpha^2\beta^2}{8\Big(\mathdutchcal{E}_\Phi({\mathsf{R}^o},\delta)-\rho\Big)^2}\log\left(\frac{2}{\varepsilon_{\rm tr}}\right).
    \end{equation}
\end{corollary}
\begin{proof}
    We obtain the result by setting the right-hand side of \eqref{eq: P_c_delta} to be no smaller than $1-\varepsilon_{\rm tr}$.
\end{proof}
\noindent With the established bound on the $\delta$-margin consistent training condition, we are now able to examine the instantaneous probability of error $P_{k,i}^e$ in \eqref{eq: instantaneous probability of error}. Let $\mathcal{M}_{k,i}$ denote the event of misclassification by agent $k$ at time $i$ and let $\mathcal{C}_\delta$ denote the event of $\delta$-margin consistent training:
\begin{equation}
    \label{eq: event of wrong classification}
    \mathcal{M}_{k,i} \triangleq\big\{ \gammab_{0}\lambdab_{k,i}\leq 0 \big\},\quad
    \mathcal{C}_\delta \triangleq\left\{{\mu^{+}}(\bm{\widetilde{\bm{f}}})-\bm{\widetilde{\bm{\mu}}}(\bm{\widetilde{\bm{f}}})>\delta,{\mu^{-}}(\bm{\widetilde{\bm{f}}})-\bm{\widetilde{\bm{\mu}}}(\bm{\widetilde{\bm{f}}})<-\delta\right\}.	
\end{equation}
According to the law of total probability, the following inequality holds for the probability of error $P_{k,i}^e$:
\begin{align}
    \nonumber
    P_{k,i}^e\overset{\eqref{eq: instantaneous probability of error}}{=}\mathbb{P}(\mathcal{M}_{k,i})&=\mathbb{P}(\mathcal{M}_{k,i}\cap\mathcal{C}_\delta)+\mathbb{P}(\mathcal{M}_{k,i}\cap\overline{\mathcal{C}_\delta})=\mathbb{P}(\mathcal{C}_\delta)\mathbb{P}\left(\mathcal{M}_{k,i}|\mathcal{C}_\delta\right)+\mathbb{P}(\overline{\mathcal{C}_\delta})\mathbb{P}\left(\mathcal{M}_{k,i}|\overline{\mathcal{C}_\delta}\right)\\
    \label{eq: upper bound of p_{k,i}}
    &\leq \mathbb{P}\left(\mathcal{M}_{k,i}|\mathcal{C}_\delta\right)+\mathbb{P}(\overline{\mathcal{C}_\delta}).
\end{align}
Now since $\mathbb{P}(\overline{\mathcal{C}_\delta})=1-P_{c,\delta}$, which can be upper bounded by using Theorem \ref{theorem: 2}, we can derive an upper bound on $P_{k,i}^e$ by characterizing the conditional probability $\mathbb{P}(\mathcal{M}_{k,i}|\mathcal{C}_\delta)$. 
\begin{theorem}[\textbf{Statistical classification error}]\label{theorem: 3}
    Assume that the agents perform the social learning protocol \eqref{eq: sl learning rule} during the prediction phase, then under the $\delta$-margin consistent training condition \eqref{eq: strong condition for consistent training}, we have
    \begin{equation}\label{eq: P_{k,i} under consistent training}
        \mathbb{P}(\mathcal{M}_{k,i}|\mathcal{C}_\delta)\leq \exp\left\{-\frac{\left(\delta i -\kappa\right)^2}{2\beta^2 i}\right\}
    \end{equation}
    for all $i\geq\frac{\kappa}{\delta}$, where 
    \begin{equation}\label{eq: definition of kappa}
        \kappa\triangleq\frac{8\beta\log K}{1-\sigma},
    \end{equation}
    and $0\leq\sigma<1$ is the second largest-magnitude eigenvalue of the combination matrix $A$. Therefore, supposing that the same assumptions as those in Theorem \ref{theorem: 2} hold, for any sequence of observations of size $S\geq\frac{\kappa}{\delta}$, the probability of classification error at each agent $k$, denoted by $P_{k,S}^e$, is upper bounded by
    \begin{align}
        \label{eq: P_e for statistical classification}
        P_{k,S}^e\leq &\;2\exp\left\{-\frac{8N_{\max}}{\alpha^2\beta^2}\Big(\mathdutchcal{E}_\Phi({\mathsf{R}^o},\delta)-\rho\Big)^2\right\}+\exp\left\{-\frac{\left(\delta S -\kappa\right)^2}{2\beta^2 S}\right\}.
    \end{align}  
\end{theorem}
\begin{proof}
    See \ref{appendix: Theorem 1}.
\end{proof}
\noindent We analyze next how we can relate the bound on consistent training from Theorem \ref{theorem: 1} (for the \emph{infinite}-sample classification) to the result of Theorem \ref{theorem: 3} (for the \emph{finite}-sample classification). As the number of observations $S$ grows, the second term on the right-hand side of \eqref{eq: P_e for statistical classification} vanishes. Then, the upper bound for $P_{k,S}^e$ approaches the first term in \eqref{eq: P_e for statistical classification}, which is an upper bound on $1-P_{c,\delta}$ from Theorem \ref{theorem: 2}. Since $P_{c,\delta}\leq P_c$, in view of \eqref{eq: P_e upper bound by 1-P_c}, $1-P_{c,\delta}$ is also an upper bound on the probability of error $P_e$ for the social learning task (i.e., the infinite-sample case). By letting $\delta\to 0$, we recover the upper bound $1-P_c$ established in \eqref{eq: P_e upper bound by 1-P_c}. Furthermore, according to \eqref{eq: P_e for statistical classification}, it is expected that the decay rate of $P_{k,S}^e$ w.r.t. $S$ will be faster when a larger decision margin $\delta$ is achieved in the training phase. Equation \eqref{eq: P_{k,i} under consistent training} provides a good estimate for the sample complexity in the prediction phase, as summarized in following Corollary \ref{corollary: sample complexity}.
\begin{corollary}[\textbf{Testing sample complexity}]\label{corollary: sample complexity}
    Assume the $\delta$-margin consistent training condition is achieved in the training phase. For any $\varepsilon_{\rm ts}>0$, each agent gets $P_{k,S}^e\leq \varepsilon_{\rm ts}$ if the number of observations during prediction satisfies
    \begin{equation}\label{eq: sample complexity}
        S\geq\frac{1}{\delta^2}\Big({\beta\sqrt{\beta^2(\log\varepsilon_{\rm ts})^2-2\kappa\delta\log\varepsilon_{\rm ts}}-\beta^2\log\varepsilon_{\rm ts}+\kappa\delta}\Big).
    \end{equation}
\end{corollary}
\begin{proof}
    We obtain the result by setting the right-hand side of \eqref{eq: P_{k,i} under consistent training} to be no greater than $\varepsilon_{\rm ts}$.
\end{proof}
\noindent Theorem \ref{theorem: 3} demonstrates the effect of different parameters on the probability of error for statistical classification. In particular, both the second largest-magnitude eigenvalue $\sigma$ and the Perron eigenvector $p$ are determined by the combination policy $A$. The decision margin $\delta$ plays a similar role to that of the minimum weighted Kullback-Leibler divergence in the social learning problem when the likelihood models are known accurately \cite{lalitha2018social,nedic2017fast,salami2017social,shahrampour2016distributed,matta2020Interplay,kayaalp2022arithmetic}. We include more discussion on the effect of $A$ and $\delta$ in the supplementary material \cite{supp}.

\section{Numerical Simulations}
\label{sec:simulations}	

In the simulations, we implement the SML strategy on binary image classification tasks built from two datasets: FashionMNIST \cite{xiao2017/online} and CIFAR10 \cite{krizhevsky2009learning}. We consider a network of 9 spatially distributed agents, where each agent observes a part of the image and they are connected through a strongly-connected communication network with the topology depicted in Fig. \ref{fig: net-a}. We also assume a self-loop for each agent (not shown in Fig. \ref{fig: net-a}). The uniform averaging rule is employed for constructing the combination policy $A$ \cite{sayed2014adaptation}. 
For both datasets, the details on the local classifier structure can be found in \cite{supp}.

\begin{figure}
\centering
\subfloat[]{\includegraphics[width=.3\linewidth]{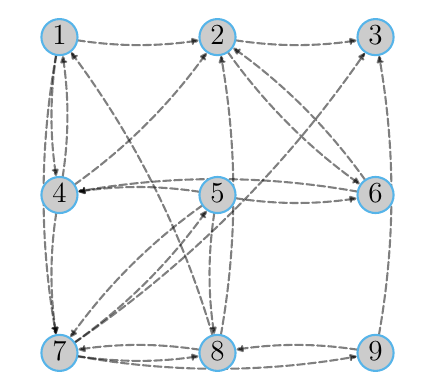}%
\label{fig: net-a}}
\hfil
\subfloat[]{\includegraphics[width=.5\linewidth]{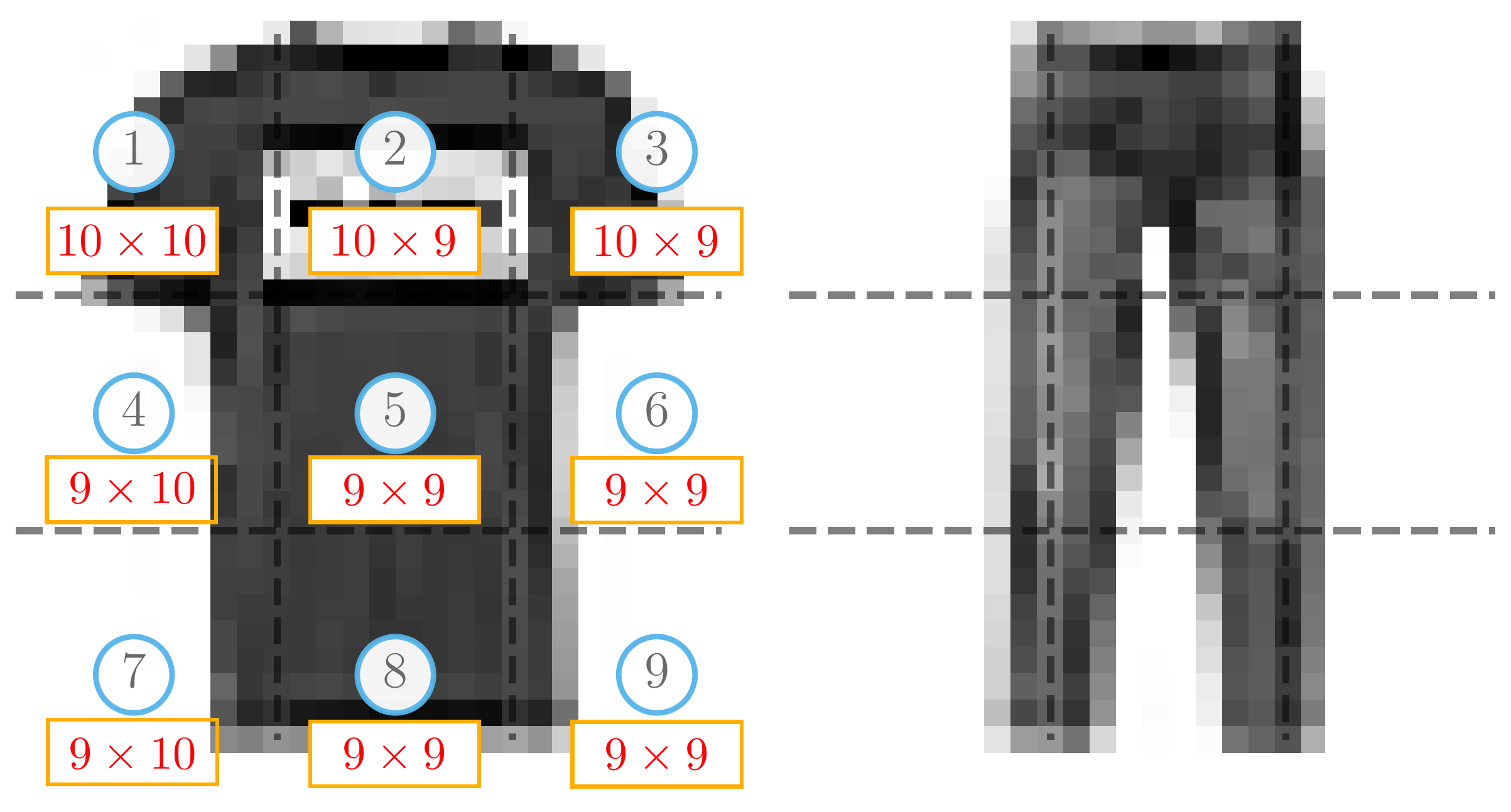}%
\label{fig: net-b}}
\caption{(a) Topology of the communication network involving $9$ agents. (b) Observation map for the $9$ agents in the binary classification tasks constructed from the FashionMNIST dataset.}
\label{fig: net}
\end{figure}

\subsection{FashionMNIST dataset}
Each image of this dataset contains 784 ($28\times 28$) pixels. These pixels are assumed to be distributed as evenly as possible among the 9 agents in the network. We build a binary classification problem to distinguish ``T-shirt'' (labeled with $+1$) from ``trouser'' (labeled with $-1$). The size of the partial image observed by each agent is shown in Fig. \ref{fig: net-b}. In the training phase, each agent trains a local classifier represented by a feedforward neural network with one hidden layer of $15$ neurons (see \cite{supp} for more details). This simple structure is employed in order to better visualize the probability of error curves. To illustrate the $\delta$-margin consistent training condition, we study different sizes of training sets. For simplicity, we assume an identical training size for all agents, i.e., $N_k=N_0, \forall k\in\mathcal{K}$. Given the value of $N_0$, a balanced training set is generated by randomly sampling from the FashionMNIST dataset. For each selected training set, the training is run using mini-batch iterates of 10 samples over 30 epochs. We employ the logistic loss in our simulations. The Adam optimizer \cite{Adam} with learning rate  0.0001 is adopted. 

In Fig. \ref{fig: performance-margin}, we show the decision margins $\delta$ achieved under different training set sizes $N_0$, where the results are averaged over 200 different randomly generated training sets for each $N_0$. It can be observed that on average, the achieved $\delta$ increases as $N_0$ grows. This indicates that with more training samples, a better learning condition (with better trained classifiers) is obtained for the prediction phase. Next, we investigate the learning performance of the SML strategy. In our simulations, the true state $\gammab_0$ in the prediction phase is set to be ``T-shirt''. For each training set considered in Fig. \ref{fig: performance-margin}, we conduct 5000 Monte Carlo runs of statistical classification based on the trained classifiers associated with this training set and obtain the averaged result. The simulation result for a specified training set size $N_0$ is then estimated empirically from the associated 200 training sets. 
\begin{figure}
\centering
\subfloat[]{\includegraphics[width=.46\linewidth]{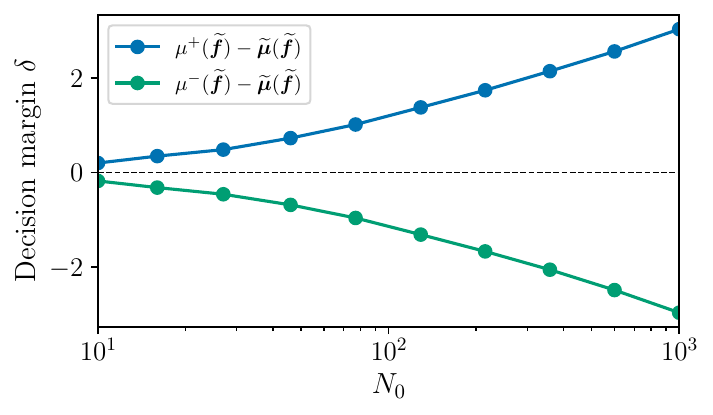}%
\label{fig: performance-margin}}
\hfil
\subfloat[]{\includegraphics[width=.49\linewidth]{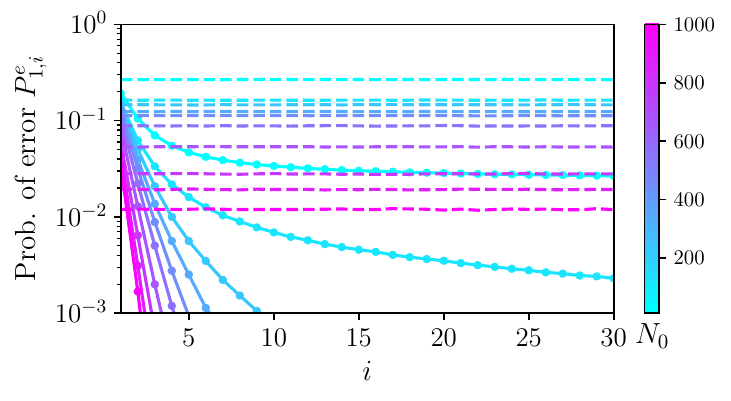}%
\label{fig: performance-sl}}
\caption{(FashionMNIST) (a) Decision margin under different training set sizes $N_0$. (b) Evolution of the probability of error within the SML strategy (solid lines with circles) and AdaBoost (dashed lines) over the prediction samples $i$ for different $N_0$.}
\label{fig: FashionMNIST}
\end{figure}

For performance comparison, we introduce the classical AdaBoost strategy where the $9$ agents are trained sequentially and their \emph{hard} decisions are combined according to the accuracy attained by each local classifier \cite{bartlett1998boosting}. We remark that the statistical classification involving \emph{distributed} and \emph{streaming} observations is not a well developed topic in the literature. The well-known AdaBoost strategy is taken here as a representative classifier to show the importance of developing new strategies for this classification task. Different from the SML strategy that aggregates the information over both time and space following the rule \eqref{eq: sl learning rule}, AdaBoost ignores the temporal dependence in the true label among the unlabeled samples. 

In Fig. \ref{fig: performance-sl}, we show the learning performance of agent $1$ in the SML strategy and that of the centralized decision in AdaBoost. The evolution of the instantaneous probability of error $P_{1,i}^e$ under different training set sizes $N_0$ is presented. We can see that for all $N_0$, $P_{1,i}^e$ decreases over time $i$ (i.e., the number of observations collected so far). For larger $N_0$, the decaying is almost exponential and the decay rate is positively correlated to the achieved decision margin $\delta$ provided by Fig. \ref{fig: performance-margin}, which is consistent with \eqref{eq: P_e for statistical classification} in Theorem \ref{theorem: 3}. In contrast to it, due to the lack of information aggregation over time, the probability of error attained by the centralized AdaBoost strategy remains invariant as the number of observations grows. Particularly, for each $N_0$, the SML strategy outperforms the AdaBoost when more than two unlabeled samples are collected. 

\subsection{CIFAR10 dataset}
We consider the binary classification problem to distinguish ``cats" from ``dogs" within this dataset. A convolutional neural network composed of two convolutional layers is employed by the agents (see \cite{supp} for more details). In our simulations, the training is run using a mini-batch of 128 samples over 100 epochs. 

In Fig. \ref{fig: performance-margin-cifar10}, we present the decision margins $\delta$ achieved by the SML strategy under different training set sizes $N_0$. Except for the case $N_0=10000$ where all training samples of the selected classes are utilized, 200 different training sets are generated randomly for each other $N_0$ to obtain the averaged result. It is obvious from Fig. \ref{fig: performance-margin-cifar10} that in general, a better $\delta$ can be achieved with more labeled samples in the training phase. Whereas, the decision margins attained in Fig. \ref{fig: performance-margin-cifar10} are much smaller than those shown in Fig. \ref{fig: performance-margin}. This implies that for the specified classifiers, distinguishing cats from dogs within the CIFAR10 dataset is much harder than distinguishing T-shirts from trousers within the FashionMNIST dataset.
\begin{figure}
\centering
\subfloat[]{\includegraphics[width=.46\linewidth]{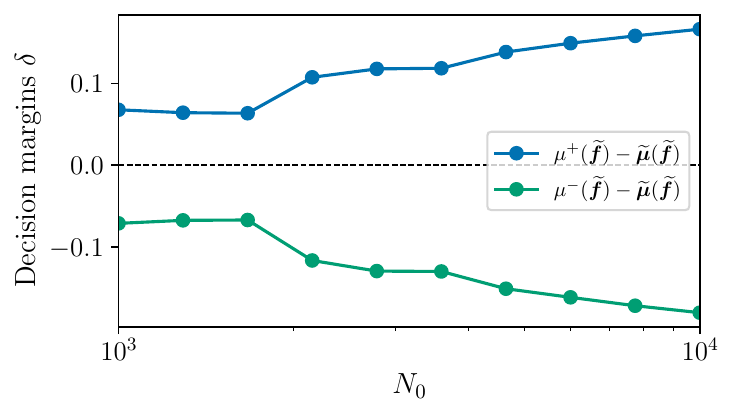}%
\label{fig: performance-margin-cifar10}}
\hfil
\subfloat[]{\includegraphics[width=.49\linewidth]{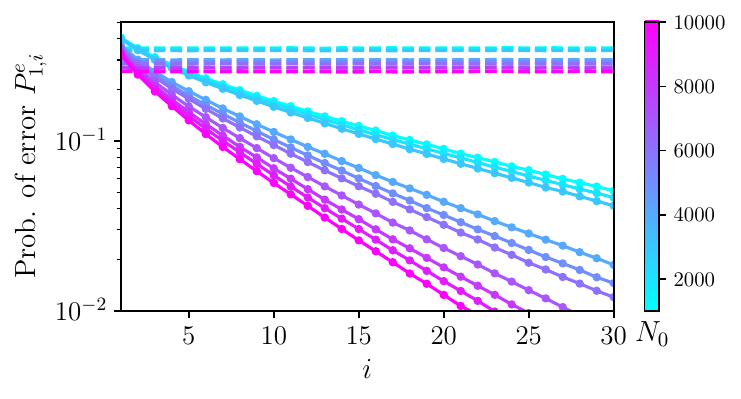}%
\label{fig: performance-sl-cifar10}}
\caption{(CIFAR10) (a) Decision margin under different training set sizes $N_0$. (b)Evolution of the probability of error within the SML strategy (solid lines with circles) and AdaBoost (dashed lines) over the prediction samples $i$ for different $N_0$.}
\label{fig: CIFAR10}
\end{figure}

For the statistical classification task where the true class is ``cat'', the instantaneous probability of error associated with agent $1$ within the SML strategy and that of the centralized AdaBoost strategy are presented in Fig. \ref{fig: performance-sl-cifar10}. As the number of observations grows, an exponential decay of the misclassification error within the SML strategy is observed for each $N_0$ in the simulations. Meanwhile, the decay rate becomes larger when the decision margin is increased with more training samples available for the training phase. Compared to AdaBoost, the SML strategy exhibits a significant advantage in prediction accuracy by effectively leveraging the temporal dependence in the label of the streaming unlabeled samples. Specially, agent $1$ is able to make a better decision within two iterations following the social learning rule \eqref{eq: sl learning rule}.

\section{Concluding remarks}
\label{sec: conclusion}
This paper studies the learning performance of the social machine learning strategy, which is a fully data-driven distributed decision-making architecture. For statistical classification tasks involving a limited number of samples in the prediction phase, we derive an upper bound on the probability of error. Our results extend the analysis in \cite{bordignon2023learning}, which investigated the classification error when the number of observations grows indefinitely. There are many interesting open questions regarding the performance of the social machine learning strategy. As indicated in \cite{bordignon2023learning}, this strategy can be formulated to address multi-class classification tasks. However, the theoretical guarantees in this case are more involved than the binary case. Specifically, the $\delta$-margin consistent training condition becomes more intricate, and the analytical methodology for the lower bound (Theorem \ref{theorem: 2}) needs to be developed. The performance analysis for multi-class classification tasks will be considered in future work.

\appendix

\section{Proof of Theorems \ref{theorem: 1} and \ref{theorem: 2}}
\label{appendix: Lemma}

Since the condition \eqref{eq: consistent training} for consistent training corresponds to the case $\delta=0$ in the $\delta$-margin consistent training condition \eqref{eq: strong condition for consistent training}, we will focus on the proof of Theorem \ref{theorem: 2} in the following. First, we recall some important results on the estimation errors of empirical risk minimization and the training mean, which are established in \cite{bordignon2023learning}.
\begin{inlemma}[\textbf{Theorem 3 in \cite{bordignon2023learning}}]\label{lemma: theorem 3}
    Under Assumptions \ref{assump: risk function}--\ref{assump: network}, we have the following two results. First,
    \begin{equation}\label{eq: Lemma A eq1}
        \mathbb{P}\left(\sup_{f\in\mathcal{F}}\abs{\widetilde{\bm{R}}_{\rm emp}(f)-\widetilde{R}(f)}\geq x\right)\leq \exp\left\{ -\frac{N_{\max}(x-4L_\Phi\rho)^2}{2\alpha^2L_\Phi^2\beta^2} \right\},
    \end{equation}
    for any $x>4L_\Phi\rho$, where $\mathcal{F}\triangleq \mathcal{F}_1\times\mathcal{F}_2\cdots\times\mathcal{F}_K$, $\widetilde{\bm{R}}_{\rm emp}(f)$ and $\widetilde{R}(f)$ are the weighted network averages for the empirical and expected risks \eqref{eq: empirical risk} and \eqref{eq: expected risk of f_k}  defined as follows: 
\begin{equation}\label{eq: network average for the expected risk}
    \widetilde{\bm{R}}_{\rm emp}(f)\triangleq\sum_{k=1}^Kp_k\widetilde{\bm{R}}_{k,{\rm emp}}(f_k),\quad \widetilde{R}(f)\triangleq\sum_{k=1}^Kp_k \widetilde{R}_k(f_k).
\end{equation}
Second,
    \begin{equation}\label{eq: Lemma A eq2}
        \mathbb{P}\left(\sup_{f\in\mathcal{F}}\abs{\widetilde{\bm{\mu}}(f)-\mu(f)}\geq x\right)\leq \exp\left\{ - \frac{N_{\max}(x-4\rho)^2}{2\alpha^2\beta^2}\right\},
    \end{equation}
    for any $x>4\rho$, where 
    \begin{equation}\label{eq: expected training mean}
        \mu(f)=\frac{\mu^{+}(f)+\mu^{-}(f) }{2},\quad \forall f\in\mathcal{F}.
    \end{equation}
\end{inlemma}\qed

\noindent We will also use the following property for strictly monotonic functions and their inverse functions \cite{binmore1982mathematical}.
\begin{property}[\textbf{Inverse of strictly monotone function \cite{binmore1982mathematical}}]\label{Property: inverse function}
    Let $f$ be a real function defined on $I\subseteq\mathbb{R}$ whose image is $J\subseteq\mathbb{R}$. Assume that $f$ is strictly monotone on $I$, then $f$ always has an inverse function $f^{-1}$ that has the same monotonicity as $f$. That is, if $f$ is strictly increasing (or decreasing), then so is $f^{-1}$.\qed
\end{property}
\noindent In order to establish Theorem \ref{theorem: 2}, we upper bound the probability $1- P_{c,\delta}$ with the following inequality:
\begin{align}
    \nonumber
    1- P_{c,\delta}
    &=\mathbb{P}\left( \left\{\mu^{+}(\widetilde{\bm{f}})\leq\widetilde{\bm \mu}({\widetilde{\bm f}})+\delta \right\} \cup \left\{ \mu^{-}(\widetilde{\bm{f}})\geq\widetilde{\bm \mu}(\widetilde{\bm f})-\delta \right\} \right)\\
    \nonumber
    &=\mathbb{P}\left(\left\{\mu^{+}(\widetilde{\bm{f}})-\mu(\widetilde{\bm{f}})\leq\widetilde{\bm{\mu}}(\widetilde{\bm{f}})-\mu(\widetilde{\bm{f}})+\delta\right\}\cup\left\{ \mu^{-}(\widetilde{\bm{f}})-\mu(\widetilde{\bm{f}})\geq\widetilde{\bm \mu}(\widetilde{\bm f})-\mu(\widetilde{\bm{f}})-\delta\right\}\right)\\
    \nonumber
    &\overset{\textnormal{(a)}}{=}\mathbb{P}\left(\abs{\widetilde{\bm \mu}(\widetilde{\bm{f}})-\mu(\widetilde{\bm{f}})}\geq\frac{\mu^{+}(\widetilde{\bm{f}})-\mu^{-}(\widetilde{\bm{f}}) }{2}-\delta \right)\\ \label{eq: proof of Lemma 1}
    &\overset{\textnormal{(b)}}{\leq}\mathbb{P}\left(\abs{\widetilde{\bm \mu}(\widetilde{\bm{f}})-\mu(\widetilde{\bm{f}})}\geq d-\delta\right)+\mathbb{P}\left(\frac{\mu^{+}(\widetilde{\bm{f}})-\mu^{-}(\widetilde{\bm{f}}) }{2}\leq d \right)
\end{align}
for any $d>\delta$, where (a) comes from the definition of $\mu(\widetilde{\bm{f}})$ in \eqref{eq: expected training mean} and (b) holds due to the law of total probability. Using \eqref{eq: Lemma A eq2} from Lemma \ref{lemma: theorem 3}, the first term of \eqref{eq: proof of Lemma 1} satisfies 
\begin{align}
    \label{eq: Lemma 1-first term}
    \mathbb{P}\left(\abs{\widetilde{\bm \mu}(\widetilde{\bm{f}})-\mu(\widetilde{\bm{f}})}\geq d-\delta\right)\leq \mathbb{P}\left(\sup_{f\in\mathcal{F}}\abs{\widetilde{\bm \mu}(f)-\mu(f)}\geq d-\delta\right)\leq\exp\left\{ -\frac{N_{\max}(d-\delta-4\rho)^2}{2\alpha^2\beta^2} \right\}
\end{align}
for all $d>4\rho+\delta$. The bound on the second term of \eqref{eq: proof of Lemma 1} can be established based on Assumption \ref{assump: risk function} and Jensen's inequality. Given the function $f_k\in\mathcal{F}_k$ for each $k\in\mathcal{K}$, the network average of the expected risks evaluated on the training samples $(\widetilde{\h}_{k,n},\widetilde{\gammab}_{k,n})$ is bounded as follows:
\begin{align}
    \nonumber
    \widetilde{R}(f)&\overset{\eqref{eq: network average for the expected risk}}{=}\sum_{k=1}^Kp_k\mathbb{E}_{(\widetilde{\h}_{k,n},\widetilde{\gammab}_{k,n})}\Phi\left(\widetilde{\gammab}_{k,n}f_k(\widetilde{\h}_{k,n})\right)\overset{\textnormal{(a)}}{\geq}\sum_{k=1}^Kp_k\Phi\left(\mathbb{E}_{(\widetilde{\h}_{k,n},\widetilde{\gammab}_{k,n})}\widetilde{\gammab}_{k,n}f_k(\widetilde{\h}_{k,n})\right)\\
    \nonumber
    &\;\overset{\textnormal{(b)}}{\geq} \Phi\left(\sum_{k=1}^Kp_k\mathbb{E}_{(\widetilde{\h}_{k,n},\widetilde{\gammab}_{k,n})}\widetilde{\gammab}_{k,n} f_k(\widetilde{\h}_{k,n}) \right)\overset{\textnormal{(c)}}{=}\Phi\left(\frac{1}{2}\sum_{k=1}^Kp_k\mathbb{E}_{+1}f_k(\widetilde{\h}_{k,n}) - \frac{1}{2}\sum_{k=1}^Kp_k\mathbb{E}_{-1}f_k(\widetilde{\h}_{k,n})\right)\\
    \label{eq: Jensen inequality}
    &\;\overset{\textnormal{(d)}}{=} \Phi \left(\frac{\mu^{+}(f)-\mu^{-}(f)}{2}\right),
\end{align}
where in (a) and (b) we use Jensen's inequality with the convex function $\Phi$. In (c), we use the uniform prior assumption on the training samples, and in (d) we use the definition of conditional means in \eqref{eq: expectation +1}--\eqref{eq: network expectation}. We recall that the notation $\mathbb{E}_{\gamma}$ denotes the expectation operator associated with the likelihood models $L_k(\cdot|\gamma)$ for $\gamma\in\Gamma$. From \eqref{eq: Jensen inequality} and the non-increasing property of $\Phi$, we know that for any given function $f\in\mathcal{F}$:
\begin{equation}
    \frac{\mu^{+}(f)-\mu^{-}(f)}{2}\leq d\Rightarrow \widetilde{R}(f)\geq \Phi(d).
\end{equation}
Replacing the generic $f$ with the trained function $\widetilde{\bm f}$ obtained from the empirical risk minimization \eqref{eq: trained f_k}, we have
\begin{align}\label{eq: Lemma1-second term}
    \mathbb{P}\left(\frac{\mu^{+}(\widetilde{\bm{f}})-\mu^{-}(\widetilde{\bm{f}}) }{2}\leq d \right)\leq\mathbb{P}\left(\widetilde{R}(\widetilde{\bm f})\geq\Phi(d)\right).
\end{align}
The probability on the right-hand side can be bounded using the uniform bound on the estimation error for the empirical risk \eqref{eq: Lemma A eq1}. First, we develop the following inequality: 
\begin{align}
    \nonumber
    \widetilde{R}(\widetilde{\bm{f}})-\mathsf{R}^o&\overset{\textnormal{(a)}}{=}\widetilde{R}(\widetilde{\bm{f}}) -\inf_{f\in\mathcal{F}} \widetilde{R}(f)=\widetilde{R}(\widetilde{\bm{f}})-\widetilde{\bm{R}}_{\rm emp}(\widetilde{\bm{f}})+\widetilde{\bm{R}}_{\rm emp}(\widetilde{\bm{f}})-\inf_{f\in\mathcal{F}} \widetilde{R}(f)\\
    \nonumber
    &=\widetilde{R}(\widetilde{\bm{f}})-\widetilde{\bm{R}}_{\rm emp}(\widetilde{\bm{f}})+\sup_{f\in\mathcal{F}}\left(\widetilde{\bm{R}}_{\rm emp}(\widetilde{\bm{f}})-\widetilde{R}(f) \right)\overset{\textnormal{(b)}}{\leq} \widetilde{R}(\widetilde{\bm{f}})-\widetilde{\bm{R}}_{\rm emp}(\widetilde{\bm{f}})+\sup_{f\in\mathcal{F}}\left(  \widetilde{\bm{R}}_{\rm emp}(f)-\widetilde{R}(f) \right)\\
    \label{eq: 2sup}
    &\leq2\sup_{f\in\mathcal{F}}\abs{\widetilde{\bm{R}}_{\rm emp}(f)-\widetilde{R}(f)}
\end{align}
where (a) follows directly from the definition of $\mathsf{R}^o$ in \eqref{eq: target risk}, while (b) is based on the definitions of $\widetilde{\bm f}_k$ and $\widetilde{\bm{R}}_{\rm emp}(f)$ in \eqref{eq: approximate logit function f_k} and \eqref{eq: network average for the expected risk} respectively, which ensure that $\widetilde{\bm{R}}_{\rm emp}(\widetilde{\bm{f}})\leq\widetilde{\bm{R}}_{\rm emp}(f)$. Therefore, one gets
\begin{align}
    \nonumber
    \mathbb{P}\left(\widetilde{R}(\widetilde{\bm f})\geq\Phi(d)\right)&=\mathbb{P}\left(\widetilde{R}(\widetilde{\bm f})-\mathsf{R}^o\geq\Phi(d)-\mathsf{R}^o\right)\overset{\textnormal{(a)}}{\leq}\mathbb{P}\left(\sup_{f\in\mathcal{F}}\abs{\widetilde{\bm{R}}_{\rm emp}(f)-\widetilde{R}(f)}\geq\frac{\Phi(d)-\mathsf{R}^o}{2} \right)\\
    \label{eq: Lemma1-second term 2}
    &\overset{\textnormal{(b)}}{\leq} \exp\left\{ -\frac{N_{\max}\Big(\frac{\Phi(d)-\mathsf{R}^o}{2}-4L_\Phi\rho\Big)^2}{2\alpha^2L_\Phi^2\beta^2} \right\}
\end{align}
for any $d$ such that $\frac{\Phi(d)-\mathsf{R}^o}{2}>4L_\Phi\rho$, where (a) comes from \eqref{eq: 2sup} and (b) is derived using \eqref{eq: Lemma A eq1} in Lemma \ref{lemma: theorem 3}.
According to \eqref{eq: proof of Lemma 1}, by combining \eqref{eq: Lemma 1-first term}, \eqref{eq: Lemma1-second term}, and \eqref{eq: Lemma1-second term 2}, we have
\begin{align}
    \nonumber
    P_{c,\delta}&\geq  1-\mathbb{P}\left(\abs{\widetilde{\bm \mu}(\widetilde{\bm{f}})-\mu(\widetilde{\bm{f}})}\geq d-\delta\right) -\mathbb{P}\left(\frac{\mu^{+}(\widetilde{\bm{f}})-\mu^{-}(\widetilde{\bm{f}}) }{2}\leq d \right)\\
    \label{eq: Lemma1-bound with two terms}
    &\geq1- \exp\left\{-\frac{N_{\max}(d-\delta-4\rho)^2}{2\alpha^2\beta^2}\right\}-\exp\left\{-\frac{N_{\max}\Big(\frac{\Phi(d)-\mathsf{R}^o}{2}-4L_\Phi\rho\Big)^2}{2\alpha^2L_\Phi^2\beta^2}\right\}
\end{align}
for any $d$ contained in the following interval: $d\in(4\rho+\delta, d_{\max})$, where $d_{\max}$ is defined by
\begin{equation}\label{eq: interval of d}
    d_{\max}\triangleq\sup\left\{x:\Phi(x)=\mathsf{R}^o+8L_\Phi\rho \right\}.
\end{equation}
If the loss function $\Phi$ is strictly monotonic, then from Property \ref{Property: inverse function}, $\Phi$ has an inverse function $\Phi^{-1}$ and $d_{\max}$ can be written as $d_{\max}=\Phi^{-1}(\mathsf{R}^o+8L_\Phi\rho)$. Observe that the leading coefficients of the exponents in \eqref{eq: Lemma1-bound with two terms} are equal when
\begin{equation}
    d-\delta-4\rho = \frac{\Phi(d)-\mathsf{R}^o}{2L_\Phi}-4\rho,
\end{equation}
which yields $\Phi(d)-\mathsf{R}^o-2L_\Phi(d-\delta)=0$.
Define $g(d)$ as the following function of variable $d$:
\begin{equation}\label{eq: g(d)}
    g(d)\triangleq d-\frac{\Phi(d)-\mathsf{R}^o}{2L_\Phi}.
\end{equation} 
For a given decision margin $\delta\geq 0$, let $d_\delta^\star$ denote a solution to the equation $g(d)=\delta$, i.e.,
\begin{equation}\label{eq: d_delta^star}
    g(d_{\delta}^\star)=\delta.
\end{equation}
If we can prove that $d_\delta^\star>\delta$, then from \eqref{eq: Lemma1-bound with two terms}, the probability of $\delta$-margin consistent training $P_{c,\delta}$ can be lower bounded as
\begin{equation}\label{eq: P_{c,delta} bound in proof}
    P_{c,\delta}\geq 1 - 2\exp\left\{-\frac{8N_{\max}\Big(\frac{d_\delta^\star-\delta}{4}-\rho\Big)^2}{\alpha^2\beta^2}\right\}
\end{equation}
when the Rademacher complexity $\rho$ satisfies $\rho<\frac{d_\delta^\star-\delta}{4}$. Let
\begin{equation}\label{eq: definition of E(R,delta)}
    \mathdutchcal{E}_\Phi(\mathsf{R}^o,\delta)\triangleq \frac{d_\delta^\star-\delta}{4}.
\end{equation}
We will show that such constant $d_\delta^\star$ exists when the margin $\delta$ is small, i.e., $\mathdutchcal{E}_\Phi(\mathsf{R}^o,\delta)>0$ for a small $\delta$. 

First, we notice that due to the non-increasing property of $\Phi$ (Assumption \ref{assump: risk function}), $g(d)$ is a strictly increasing function of $d$. This implies that for any $\delta>0$, the solution $d_\delta^\star$ specified in \eqref{eq: d_delta^star} is unique. Furthermore, $\Phi$ is differentiable at $0$ and $\Phi^\prime(0)<0$ under Assumption \ref{assump: risk function}. With the condition $\mathsf{R}^o<\Phi(0)$ in \eqref{eq: R^o condition}, we have
\begin{equation}
    g(0)=-\frac{\Phi(0)-\mathsf{R}^o}{2L_\Phi}<0.
\end{equation}
Since $g(d)$ is increasing, we know that the equation $g(d)=\delta$ has a positive solution for the case $\delta=0$. That is, $\exists d_0^\star>0$ such that $g(d_0^\star)=0$. Due to the strict monotonicity of $g$, we know that according to Property \ref{Property: inverse function}, there is an inverse function $g^{-1}$ which is strictly increasing. Therefore, for any $\delta>0$, the equation $g(d)=\delta$ has a unique positive solution $d_\delta^\star=g^{-1}(\delta)$. Moreover, $d_\delta^\star$ increases with a larger margin $\delta$.  Let
\begin{equation}
    d_{\mathsf{R}}\triangleq\inf \left\{x:\Phi(x)=\mathsf{R}^o \right\}
\end{equation}
be the infimum of the set of $x$ corresponding to the target risk $\mathsf{R}^o$. Particularly, if the loss function $\Phi$ is strictly decreasing, we have $d_{\mathsf{R}}=\Phi^{-1}(\mathsf{R}^o)$. It is clear from $\eqref{eq: R^o condition}$ that $d_{\mathsf{R}}>0$. By definition of $d_{\delta}^\star$ in \eqref{eq: d_delta^star}, we get from \eqref{eq: g(d)} and \eqref{eq: definition of E(R,delta)} that
\begin{equation}\label{eq: mathdutch_E_R_delta definition}
    \mathdutchcal{E}_\Phi(\mathsf{R}^o,\delta)= \frac{d_\delta^\star-\delta}{4}=\frac{\Phi(d_\delta^\star)-\mathsf{R}^o}{8L_\Phi}.
\end{equation}
Therefore, $\mathdutchcal{E}_\Phi(\mathsf{R}^o,\delta)>0$ if, and only if, $\Phi(d_\delta^\star)>\mathsf{R}^o$. In other words, the lower bound provided by \eqref{eq: P_{c,delta} bound in proof} is meaningful if, and only if, $d_\delta^\star<d_{\mathsf{R}}$.
Since $d_{\delta}^{\star}$ is increasing with $\delta$, the decision margin $\delta$ must be selected to be smaller than a constant $\delta_{\max}$ whose definition is 
\begin{equation} \label{eq: delta_max definition}
    \delta_{\max}\triangleq\sup\big\{\delta\geq 0: g(d)=\delta \text{ has a solution } d_\delta^\star<d_{\mathsf{R}}\big\}.
\end{equation}
The existence of the constant $\delta_{\max}$ is guaranteed by the strict monotonicity of function $g$. Therefore, \eqref{eq: P_{c,delta} bound in proof} holds for any $\delta<\delta_{\max}$ and $\rho<\mathdutchcal{E}_\Phi(\mathsf{R}^o,\delta)$. This proves \eqref{eq: P_c_delta} in Theorem \ref{theorem: 2}. 

Since our proof does not require $\delta$ to be strictly greater than 0, the above analysis applies also to the case $\delta=0$, i.e., the case of consistent training \eqref{eq: consistent training}. Let $\delta=0$ in \eqref{eq: definition of E(R,delta)}, we have
\begin{equation}\label{eq: d_0^star}
    \mathdutchcal{E}_\Phi(\mathsf{R}^o,0)=\frac{d_0^\star}{4}.
\end{equation}
In addition, we can establish the following relation for $\mathdutchcal{E}_\Phi(\mathsf{R}^o,\delta)$ under consistent training and $\delta$-margin consistent training conditions:
\begin{align}
    \label{eq: comparison between delta and 0}
    \mathdutchcal{E}_\Phi(\mathsf{R}^o,\delta)&\overset{\text{(a)}}{=}\frac{\Phi(d_\delta^\star)-\mathsf{R}^o}{8L_\Phi}=\frac{\Phi(d_\delta^\star)-\Phi(d_0^\star)}{8L_\Phi}+\frac{\Phi(d_0^\star)-\mathsf{R}^o}{8L_\Phi}\overset{\text{(b)}}{\leq} \frac{\Phi(d_0^\star)-\mathsf{R}^o}{8L_\Phi}=\mathdutchcal{E}_\Phi(\mathsf{R}^o,0)
\end{align}
where (a) follows from \eqref{eq: mathdutch_E_R_delta definition} and (b) is due to $d_{\delta}^\star\geq d_0^\star$ and the non-increasing property of function $\Phi$.

\section{Proof of Theorem \ref{theorem: 3}}
\label{appendix: Theorem 1}

We introduce McDiarmid's inequality and the convergence of matrix powers in the following Lemmas.

\begin{inlemma}[\textbf{McDiarmid's inequality}] \label{lemma: McDiarmid's inequality}
    Let $\bm{x}$ represent a sequence of independent random variables $\bm{x}_n$, with $n=1,2,\dots,N$ and $\bm{x}_n\in\mathcal{X}_n$ for all $n$. Suppose that the function $g$: $\prod_{n=1}^{N}\mathcal{X}_n\mapsto \mathbb{R}$ satisfies for every $n=1,2,\dots,N$:
    \begin{equation}\label{eq: McDiarmid inequality-bound}
        \abs{g(x)-g(\widehat{x})}\leq b_n
    \end{equation}
    whenever the sequences $x$ and $\widehat{x}$ differ only in the $n$-th component. Then, for any $\epsilon>0$:
    \begin{align}
        \label{eq: McDiarmid inequality-greater}
        \mathbb{P}\left(g(\bm{x})-\mathbb{E} g(\bm{x})\geq \epsilon\right)\leq\exp(-\frac{2\epsilon^2}{\sum_{n=1}^N b_n^2}),\quad
        \mathbb{P}\left( g(\bm{x})-\mathbb{E} g(\bm{x})\leq - \epsilon\right)\leq\exp(-\frac{2\epsilon^2}{\sum_{n=1}^N b_n^2}).
    \end{align}
\end{inlemma}\qed

\begin{inlemma}[\textbf{Convergence of matrix powers \cite{nedic2017fast}}]\label{lemma: convergence of matrix power}
    Consider a strongly-connected network of $K$ agents and a left-stochastic combination policy $A$. Then, for any $t$ and for any agent $k$, the following inequality holds:
    \begin{equation}\label{eq: convergence of matrix power}
        \sum_{\tau=1}^t\sum_{\ell=1}^K\abs{[A^{t-\tau}]_{\ell k}-p_\ell}\leq\frac{4\log K}{1-\sigma}.
    \end{equation}
    where $0\leq \sigma<1$ denotes the second largest-magnitude eigenvalue of $A$.\qed
\end{inlemma}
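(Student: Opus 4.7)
The plan is to recognize the inner sum as a total-variation-type distance that decays geometrically with a mixing time of order $\log K/(1-\sigma)$, and then to sum the resulting geometric tail. Fix the agent $k$ and abbreviate
\[
D(n)\;\triangleq\;\sum_{\ell=1}^{K}\bigl|[A^n]_{\ell k}-\pi_\ell\bigr|,
\]
so that the left-hand side of the claim is $\sum_{n=0}^{t-1} D(n)$. Since $A$ is left-stochastic (see \eqref{eq: combination weight}), each column of $A^n$ is itself a probability distribution; in fact $[A^n]_{\cdot k}$ is the state distribution at time $n$ of the Markov chain with transition matrix $A$ started from state $k$, and $\pi$ is its unique stationary distribution. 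Thus $D(n)=2\,\|[A^n]_{\cdot k}-\pi\|_{\mathrm{TV}}\leq 2$ is a universal, coarse bound that will carry the short-time regime.

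To obtain geometric decay, introduce $E\triangleq A-\pi\mathbf{1}^\top$. Using $A\pi=\pi$ and $\mathbf{1}^\top A=\mathbf{1}^\top$ (column-stochasticity), a short induction yields $E^n=A^n-\pi\mathbf{1}^\top$ for every $n\geq 1$. By Assumption~\ref{assump: network} and the Perron–Frobenius theorem, the spectral radius of $E$ equals the second largest-magnitude eigenvalue $\sigma<1$ of $A$. Combined with $\|E^n e_k\|_1\leq\sqrt{K}\,\|E^n\|_{\mathrm{op}}$ and Gelfand's formula, this gives a bound of the form
\[
D(n)\;\leq\;c(K)\,\sigma^n,\qquad n\geq 1,
\]
where $c(K)$ depends at worst polynomially on $K$; any Jordan-block polynomial factor can be absorbed by replacing $\sigma$ with a slightly larger rate and re-optimizing constants.

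The final step is a split-at-mixing-time argument. Choose a threshold $n^\star$ of size $\Theta(\log K/(1-\sigma))$ so that $c(K)\sigma^{n^\star}\leq 1$, using the elementary inequality $\log(1/\sigma)\geq 1-\sigma$. Then
\[
\sum_{n=0}^{t-1}D(n)\;\leq\;\underbrace{\sum_{n=0}^{n^\star-1}2}_{\text{trivial bound}}\;+\;\underbrace{\sum_{n=n^\star}^{t-1}c(K)\sigma^{n}}_{\text{geometric tail}}\;\leq\;2n^\star+\frac{1}{1-\sigma},
\]
which is of order $\log K/(1-\sigma)$, independent of $t$. Careful tracking of constants, together with the factor-of-two from the equivalence $D(n)=2\|\cdot\|_{\mathrm{TV}}$, recovers the explicit factor $4$ appearing in the statement.

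The main obstacle will be getting the \emph{sharp} constant $4$. A plain spectral-norm estimate as above only yields the correct order of magnitude but typically loses a $\sqrt{K}$ factor inside $c(K)$, which sits under the logarithm and slightly inflates the leading constant. Following \cite{nedic2017fast}, a cleaner route is to bypass spectral norms and instead establish an $\ell_1$-contraction property of $E$ on the mean-zero subspace $\{v:\mathbf{1}^\top v=0\}$ of the form $\|Ev\|_1\leq (1-c(1-\sigma)/\log K)\,\|v\|_1$, which upon integration over $n$ produces the exact constant $4$ in the bound.
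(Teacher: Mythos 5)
You should first note that the paper contains \emph{no proof} of this lemma: it is imported verbatim, with a citation to \cite{nedic2017fast}, and used as a black box in Appendix B. So your attempt can only be judged as a reconstruction of the source's argument. Your outer skeleton is the right one and matches the standard derivation: write the left-hand side as $\sum_{n=0}^{t-1}D(n)$ with $D(n)=\sum_{\ell}\abs{[A^{n}]_{\ell k}-\pi_\ell}$, bound $D(n)\le\min\{2,\,c\,\sigma^n\}$, split at a mixing threshold $n^\star$, sum the geometric tail, and convert $\log(1/\sigma)$ into $1-\sigma$; the identities $E^n=A^n-\pi\mathbbm{1}^\top$ and ``spectral radius of $E$ equals $\sigma$'' are also correct. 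The genuine gap is the middle inequality $D(n)\le c(K)\sigma^n$ with $c(K)$ ``at worst polynomial in $K$.'' Gelfand's formula is purely asymptotic, and the non-asymptotic constant hidden in a Jordan-form estimate is the conditioning of the eigenbasis of $E$, which is \emph{not} controlled by $K$ and $\sigma$. A crisp failure: if $E$ is nilpotent but nonzero (eigenvalues of $A$ are $\{1,0,\dots,0\}$ with a nontrivial Jordan block), then $\sigma=0$ while $D(1)>0$ for some agent $k$, so no bound of the form $\mathrm{poly}(K)\,\sigma^n$ holds for general left-stochastic $A$. Likewise, ``absorbing the polynomial factor by slightly enlarging $\sigma$'' degrades $1/(1-\sigma)$ uncontrollably, so the explicit constant $4$ at the stated rate cannot be recovered by constant-tracking.

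More importantly, the defect is not repairable within your purely spectral framework, because for general left-stochastic $A$ the left-hand side of \eqref{eq: convergence of matrix power} is simply not of order $\log K/(1-\sigma)$ as a function of $(K,\sigma)$. Take $A=P^\top$ with $P$ the biased random walk on a path of $K$ states (self-loops at the endpoints): this chain is reversible with spectral gap bounded below by a constant independent of $K$, yet $\pi_{\min}$ is exponentially small, and starting from the far end one has $D(n)\approx 2$ for all $n\lesssim K/2$, so the sum is $\Theta(K)$, far above $\Theta(\log K)$. What makes the estimate work in \cite{nedic2017fast} is additional structure: there the combination matrices are doubly stochastic, so that $\pi=\mathbbm{1}/K$ and the mean-zero $\ell_2$ contraction $\|A^n e_k-\pi\|_2\le\sigma^n$ (with $\sigma$ the relevant second singular value, or eigenvalue in the normal/symmetric case) gives the clean amplitude $D(n)\le\sqrt{K}\,\sigma^n$ with no hidden constants; splitting at $n^\star\approx\log\sqrt{K}/\log(1/\sigma)$ and using $\log(1/\sigma)\geq 1-\sigma$ then yields the explicit $4\log K/(1-\sigma)$. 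Your proposed final-paragraph patch --- a one-step $\ell_1$ contraction $\|Ev\|_1\le\bigl(1-c(1-\sigma)/\log K\bigr)\|v\|_1$ on mean-zero vectors --- is also not a true fact: one-step $\ell_1$ contraction toward $\pi$ is governed by Dobrushin's ergodicity coefficient, which equals $1$ whenever two columns of $A$ have disjoint supports (the typical sparse-graph situation), irrespective of $\sigma$. In short, the proposal has a real hole at its quantitative core, and closing it requires structural hypotheses (double stochasticity or normality, or explicit control of $\pi_{\min}$ or of the eigenbasis conditioning) that go beyond the spectral data $(K,\sigma)$ you invoke.
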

\noindent Before proving Theorem \ref{theorem: 3}, we note that the approximate logit functions $\widetilde{\bm{f}}_k$ and the corresponding trained classifiers $\widetilde{\bm{c}}_{k}$ are random w.r.t. the training samples $(\widetilde{\h}_{k,n},\widetilde{\gammab}_{k,n})$. Since the training phase is independent of the prediction phase within the SML framework, the randomness stemming from the training phase can be ``frozen'' when we develop our analysis for the prediction phase. Particularly, for any observation $\h_{k,i}$ in the prediction phase, both $\widetilde{\bm{f}}_k(\h_{k,i})$ and $\widetilde{\bm{c}}_{k}(\h_{k,i})$ are deterministic values since the expressions of $\widetilde{\bm{f}}_k$ and $\widetilde{\bm{c}}_{k}$ have been specified in the training phase. To eliminate any potential ambiguity, in this proof, we will use normal fonts for random variables that are independent of the prediction phase. For example, we will use the notation $\widetilde{{f}}_k$ and $\widetilde{{c}}_{k}$ instead of $\widetilde{\bm{f}}_k$ and $\widetilde{\bm{c}}_{k}$ throughout this proof. However, we keep the symbol $\sim$ on top of variables related to the training phase. Our proof proceeds as follows.

 We begin with the case where the true state of the statistical classification task is $+1$, namely, $\gammab_{0}=+1$. According to \eqref{eq: sl learning rule}, the log-belief ratio of agent $k$ at time $i$ is expressed by
\begin{equation}
    \bm{\lambda}_{k,i}\overset{\eqref{eq: sl learning rule}}{=}\sum_{\ell=1}^{K} a_{\ell k}\left(\lambdab_{\ell,i-1}+\widetilde{{c}}_{\ell}({\h}_{\ell,i})\right)
    =\sum_{\ell=1}^K[A^i]_{\ell k}\bm{\lambda}_{\ell,0}+\sum_{\tau=1}^i\sum_{\ell=1}^K [A^{i+1-\tau}]_{\ell k}\widetilde{{c}}_{\ell}(\h_{\ell,\tau}).
\end{equation}
Under the uniform initial belief condition, i.e., $\bm{\lambda}_{\ell,0}=0$ for all $\ell\in\mathcal{K}$, we get
\begin{equation}	\label{eq: instantaneous log-belief ratio}
    \bm{\lambda}_{k,i}=\sum_{\tau=1}^i\sum_{\ell=1}^K [A^{i+1-\tau}]_{\ell k}\widetilde{{c}}_{\ell}(\h_{\ell,\tau}).
\end{equation}
First, we show that the expectation of $\bm{\lambda}_{k,i}$ is lower bounded by the upcoming equation \eqref{eq: lower bound of the expectation}. Taking the expectation of \eqref{eq: instantaneous log-belief ratio} w.r.t. the historical observations received by the network until time $i$, we have
\begin{align}
    \nonumber
    \mathbb{E}_{+1}{\bm{\lambda}_{k,i}}&=\mathbb{E}_{+1}\left[\sum_{\tau=1}^i\sum_{\ell=1}^K [A^{i+1-\tau}]_{\ell k}\widetilde{{c}}_{\ell}(\h_{\ell,\tau}) \right]\overset{\text{(a)}}{=}\sum_{\tau=1}^i\sum_{\ell=1}^K [A^{i+1-\tau}]_{\ell k}\mathbb{E}_{+1}\big[\widetilde{{c}}_{\ell}(\h_{\ell,\tau})\big]\\
    \nonumber
    &\overset{\text{(b)}}{=}\sum_{\tau=1}^i\sum_{\ell=1}^K [A^{i+1-\tau}]_{\ell k}\big(\mu_\ell^{+}(\widetilde{{f}}_\ell)-\widetilde{{\mu}}_\ell(\widetilde{{f}}_{\ell})\big)\\
    \label{eq: expectation of log-belief ratio}
    &\overset{\text{(c)}}{=}\sum_{\tau=1}^i\sum_{\ell=1}^K \big([A^{i+1-\tau}]_{\ell k}-p_\ell\big)\big(\mu_\ell^{+}(\widetilde{{f}}_{\ell})-\widetilde{{\mu}}_\ell(\widetilde{{f}}_{\ell})\big)+i\big(\mu^{+}(\widetilde{{f}})-\widetilde{{\mu}}(\widetilde{{f}})\big)
\end{align}
where in (a) we use the independence of local observations over time conditioned on the true state $\gammab_{0}$, and in (b) we use the assumption of $\gammab_{0}=+1$. In (c) we use the definitions of $\mu^{+}(\widetilde{{f}})$ and $\widetilde{{\mu}}(\widetilde{{f}})$ in \eqref{eq: network expectation} and \eqref{eq: network training mean}. 
With the bound of $f_k$ specified in Assumption \ref{assump: bound}, we get $|\mu_\ell^{+}(\widetilde{{f}}_\ell)-\widetilde{{\mu}}_\ell(\widetilde{{f}}_\ell)|\leq 2\beta$. In view of the convergence of matrix powers \eqref{eq: convergence of matrix power} in Lemma \ref{lemma: convergence of matrix power}, we have
\begin{align} 
    \nonumber
    \abs{\sum_{\tau=1}^i\sum_{\ell=1}^K \big([A^{i+1-\tau}]_{\ell k}-p_\ell\big)\big(\mu_\ell^{+}(\widetilde{{f}}_\ell)-\widetilde{{\mu}}_\ell(\widetilde{{f}}_\ell)\big)}&\overset{\textnormal{(a)}}{\leq}\sum_{\tau=1}^i\sum_{\ell=1}^K  \abs{[A^{i+1-\tau}]_{\ell k}-p_\ell}\cdot\max_{\ell\in\mathcal{K}}\abs{\mu_\ell^{+}(\widetilde{{f}}_\ell)-\widetilde{{\mu}}_\ell(\widetilde{{f}}_\ell)}\\
    &\leq\frac{4\log K}{1-\sigma}\cdot 2\beta=\frac{8\beta\log K}{1-\sigma}=\kappa
\end{align}
where (a) follows from H{\"o}lder's inequality \cite{sayed2022inference}, and $\kappa$ is defined in \eqref{eq: definition of kappa}.
Therefore, the expectation $\mathbb{E}_{+1}{\bm{\lambda}_{k,i}}$ in \eqref{eq: expectation of log-belief ratio} is lower bounded by
\begin{equation}\label{eq: lower bound of the expectation}
    \mathbb{E}_{+1}{\bm{\lambda}_{k,i}}\geq i(\mu^{+}(\widetilde{{f}})-\widetilde{{\mu}}(\widetilde{{f}}))-\kappa.
\end{equation}
Next, we show that the boundedness of $f_k$ also ensures that the log-belief ratio $\bm{\lambda}_{k,i}$, as a function involving the historical observations, satisfies the bounded difference condition \eqref{eq: McDiarmid inequality-bound}. We first recall the notation $\bm{\mathsf{h}}_{i}$ for the collection of observations at time $i$ in \eqref{eq: random vector of observations}. It is worth noting that $\bm{\mathsf{h}}_{i}$ is independent over time. Following \eqref{eq: instantaneous log-belief ratio}, we can view the log-belief ratio of agent $k$ at time $i$ as a function of the random vectors $\bm{\mathsf{h}}_1$, $\bm{\mathsf{h}}_2$, \dots, $\bm{\mathsf{h}}_{i}$. Let us denote
\begin{equation}
    \bm{\mathsf{H}}_{i}\triangleq\text{col}\big\{\bm{\mathsf{h}}_1,{\bm{\mathsf{h}}}_2,\dots,{\bm{\mathsf{h}}}_i \big\}=\text{col}\big\{\h_{k,\tau}, \forall k\in\mathcal{K}, \forall 1\leq \tau\leq i\big\}
\end{equation}
as the sequence of observations received by the network up to time $i$. For any given sequence $\bm{\mathsf{H}}_i$, the corresponding log-belief ratio of agent $k$ is represented by $\lambdab_{k,i}(\bm{\mathsf{H}}_i)$. Now let us consider another sequence of observations $\widehat{\bm{\mathsf{H}}}_i$ that differs from $\bm{\mathsf{H}}_i$ only in $\bm{\mathsf{h}}_{\tau}$, i.e., $\widehat{\bm{\mathsf{h}}}_{\tau^\prime}=\bm{\mathsf{h}}_{\tau^\prime}$, $\forall\tau^\prime\neq\tau$. Then, the difference between $\lambdab_{k,i}(\bm{\mathsf{H}}_i)$ and $\lambdab_{k,i}(\widehat{\bm{\mathsf{H}}}_i)$ is bounded as
\begin{align}
    \nonumber
    \abs{\lambdab_{k,i}(\bm{\mathsf{H}}_i)-\lambdab_{k,i}(\widehat{\bm{\mathsf{H}}}_i) }
    &\overset{\text{(a)}}{=}\abs{\sum_{\ell=1}^K [A^{i+1-\tau}]_{\ell k}\widetilde{{c}}_{\ell}(\h_{\ell,\tau})-\sum_{\ell=1}^{K}[A^{i+1-\tau}]_{\ell k}\widetilde{{c}}_{\ell}(\widehat{\h}_{\ell,\tau})}\\
    \nonumber
    &\overset{\text{(b)}}{=}\abs{\sum_{\ell=1}^{K}[A^{i+1-\tau}]_{\ell k}\big(\widetilde{{f}}_{\ell}(\h_{\ell,\tau})- \widetilde{{f}}_{\ell}(\widehat{\h}_{\ell,\tau})\big)}\overset{\text{(c)}}{\leq}\sum_{\ell=1}^{K}[A^{i+1-\tau}]_{\ell k}\abs{\widetilde{{f}}_{\ell}(\h_{\ell,\tau})- \widetilde{{f}}_{\ell}(\widehat{\h}_{\ell,\tau})}\\
    \label{eq: bounded variation of each agent}
    &\overset{\text{(d)}}{\leq} 2\sum_{\ell=1}^{K}[A^{i+1-\tau}]_{\ell k}\beta\overset{\text{(e)}}{=}2\beta.
\end{align}
where (a) is due to \eqref{eq: instantaneous log-belief ratio} and the assumption on $\bm{\mathsf{H}}_i$ and $\widehat{\bm{\mathsf{H}}}_i$, and (b) is due to the definition of $\widetilde{{c}}_\ell$ given in \eqref{eq: trained classifier}. In (c), we use the triangle inequality for absolute values. Inequality (d) follows directly from Assumption \ref{assump: bound}. The last equality (e) holds due to the stochastic matrix $A$. That is, the matrix power of a left-stochastic matrix $A$ still gives a left-stochastic matrix: $\forall m=1,2,\dots$, $\mathbbm{1}^\top A^m=\mathbbm{1}^\top A^{m-1}=\dots=\mathbbm{1}^\top A =\mathbbm{1}^\top$, where $\mathbbm{1}$ is the vector of all ones.
In terms of condition \eqref{eq: McDiarmid inequality-bound} in Lemma \ref{lemma: McDiarmid's inequality}, we now know that $\lambdab_{k,i}$ has a bounded difference $b_\tau$ with respect to the random vector $\bm{\mathsf{h}}_\tau$. Moreover, $b_\tau$ is uniform for all random vectors $\bm{\mathsf{h}}_\tau$, i.e., $b_\tau = 2\beta$, $\forall 1\leq\tau\leq i$. Based on this condition, we apply Lemma \ref{lemma: McDiarmid's inequality} to the log-belief ratio $\bm{\lambda}_{k,i}$ and obtain
\begin{align}
\nonumber
\mathbb{P}\big(\bm{\lambda}_{k,i}\leq0|\gammab_{0}=+1\big)&=\mathbb{P}\big(\bm{\lambda}_{k,i}-\mathbb{E}_{+1}{\bm{\lambda}_{k,i}}\leq-\mathbb{E}_{+1}{\bm{\lambda}_{k,i}}\big)\overset{\textnormal{(a)}}{\leq}\mathbb{P}\Big(\bm{\lambda}_{k,i}-\mathbb{E}_{+1}{\bm{\lambda}_{k,i}}\leq\kappa-i\big(\mu^{+}(\widetilde{{f}})-\widetilde{{\mu}}(\widetilde{{f}})\big) \Big)\\
\label{eq: proof of theorem 1- case +1}
&\overset{\textnormal{(b)}}{\leq}\exp\left\{-\frac{2\left( i\big(\mu^{+}(\widetilde{{f}})-\widetilde{{\mu}}(\widetilde{{f}})\big)-\kappa \right)^2}{\sum_{\tau=1}^{i}b_\tau^2}\right\}{\leq}\exp\left\{-\frac{\left( i\big(\mu^{+}(\widetilde{{f}})-\widetilde{{\mu}}(\widetilde{{f}})\big)-\kappa \right)^2}{2\beta^2i}\right\}
\end{align}
for all $i\geq \frac{\kappa}{\mu^{+}(\widetilde{{f}})-\widetilde{{\mu}}(\widetilde{{f}})}>0$, where in (a) we use the lower bound \eqref{eq: lower bound of the expectation}, and in (b) we use the McDiarmid's inequality \eqref{eq: McDiarmid inequality-greater}. Conditioning on the $\delta$-margin consistent training event $\mathcal{C}_{\delta}$ \eqref{eq: event of wrong classification}, we have that for all $i\geq\frac{\kappa}{\delta}$,
\begin{equation}\label{eq: P_{k,i} conditioned on B}
    \mu^{+}(\widetilde{{f}})-\widetilde{{\mu}}(\widetilde{{f}})>\delta,
\quad\text{and}\quad  \mathbb{P}\left(\bm{\lambda}_{k,i}\leq0|\gammab_{0}=+1,\mathcal{C}_\delta\right)\leq \exp\left\{-\frac{\left( i\delta-\kappa \right)^2}{2\beta^2i}\right\}.
\end{equation}
\noindent The above analysis is discussed for the case $\gammab_0=+1$. For the case $\gammab_{0}=-1$, repeating the steps \eqref{eq: expectation of log-belief ratio}--\eqref{eq: P_{k,i} conditioned on B} yields
\begin{equation}\label{eq: P_{k,i} conditioned on B 2}
    \mu^{-}(\widetilde{{f}})-\widetilde{{\mu}}(\widetilde{{f}})<-\delta,
\quad\text{and}\quad   \mathbb{P}\left(\bm{\lambda}_{k,i}\geq0|\gammab_{0}=-1,\mathcal{C}_\delta\right)\leq \exp\left\{-\frac{\left( i\delta-\kappa \right)^2}{2\beta^2i}\right\}
\end{equation}
for all $i\geq\frac{\kappa}{\delta}$. Hence, the conditional probability $\mathbb{P}\left(\mathcal{M}_{k,i}|\mathcal{C}_\delta\right)$ can be bounded as
\begin{equation}\label{eq: proof of theorem 1-conditional probability}
    \mathbb{P}\left(\mathcal{M}_{k,i}|\mathcal{C}_\delta\right)=\mathbb{P}(+1)\mathbb{P}\left(\lambdab_{k,i}\leq 0|\gammab_{0}=+1,\mathcal{C}_\delta\right)+ \mathbb{P}(-1)\mathbb{P}\left(\lambdab_{k,i}\geq 0|\gammab_{0}=-1,\mathcal{C}_\delta\right)\leq \exp\left\{-\frac{( i\delta-\kappa)^2}{2\beta^2i}\right\}
\end{equation}
for all $i\geq\frac{\kappa}{\delta}$. From \eqref{eq: upper bound of p_{k,i}}, the instantaneous probability of error $P_{k,i}^e$ is upper bounded by $ \mathbb{P}\left(\mathcal{M}_{k,i}|\mathcal{C}_\delta\right) + \mathbb{P}(\overline{\mathcal{C}_\delta})$. The proof of Theorem \ref{theorem: 3} is completed by recalling the upper bound for $\mathbb{P}(\overline{\mathcal{C}_\delta})$ in Theorem \ref{theorem: 2}. 

\bibliographystyle{elsarticle-num}
\bibliography{SML}

\newpage
\pagestyle{empty}
\includepdf[pages=-, pagecommand={}, linktodoc=true]{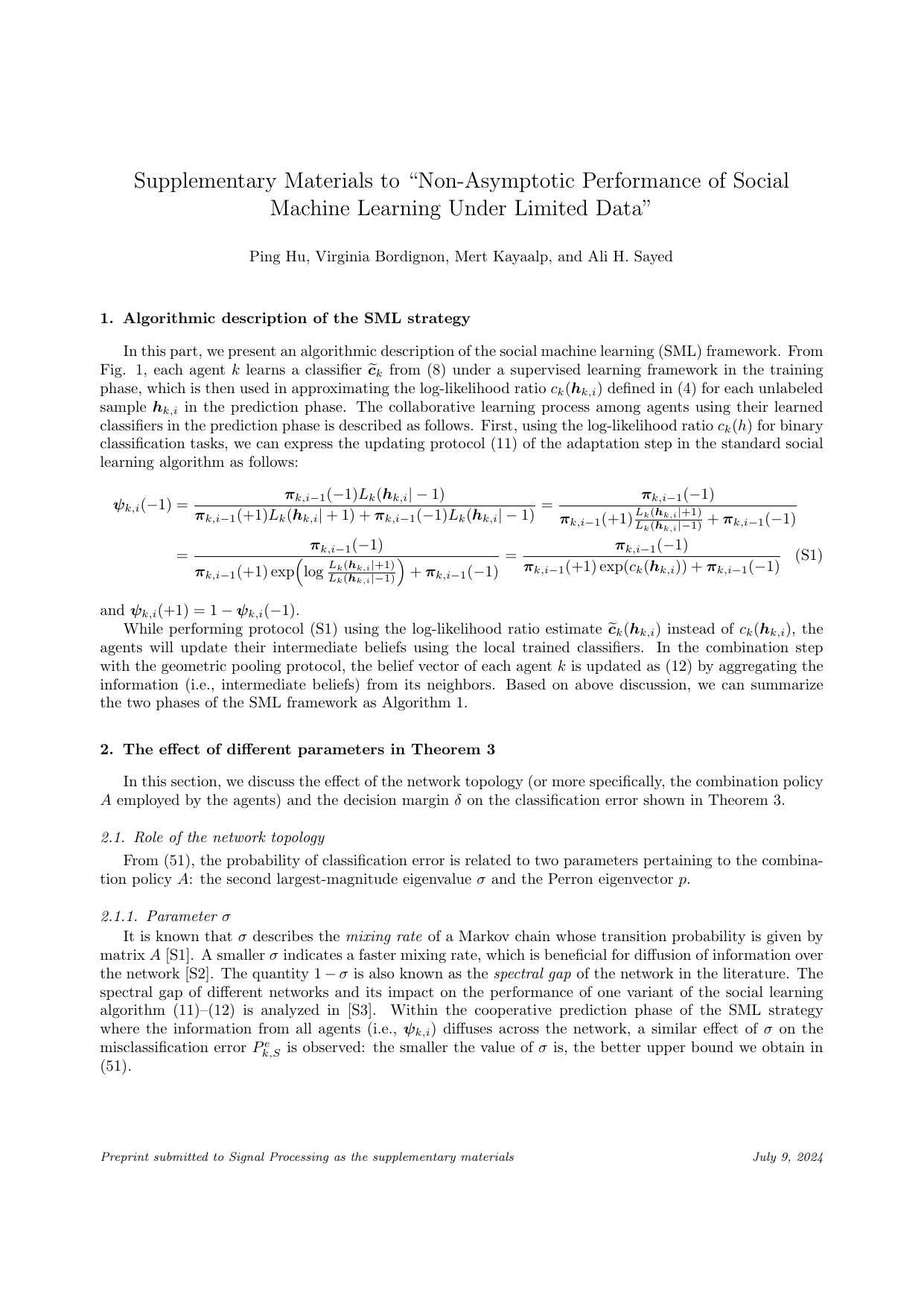}

\end{document}